\documentclass[letterpaper]{article}
\usepackage[preprint]{aaai2027}
\usepackage{times}
\usepackage{helvet}
\usepackage{courier}
\usepackage[hyphens]{url}
\usepackage{graphicx}
\usepackage{makecell}
\usepackage{array}
\usepackage{enumitem}
\usepackage{multirow}
\usepackage{subcaption}
\usepackage{graphicx}
\usepackage{tabularx}
\usepackage{color}
\usepackage{xspace} 
\usepackage{pifont}
\usepackage[utf8]{inputenc}
\usepackage{tcolorbox}
\usepackage{tabularx}
\usepackage{enumitem}
\usepackage{diagbox}
\usepackage[flushleft]{threeparttable}
\usepackage{marvosym}
\usepackage{amssymb}
\usepackage{amsmath}
\usepackage{amsthm}
\newtheorem{proposition}{Proposition}

\usepackage{booktabs}
\usepackage{xltabular}
\usepackage{tcolorbox}
\tcbuselibrary{breakable}
\usepackage{xcolor,colortbl}

\definecolor{fbApp}{HTML}{ffe4e3}
\definecolor{mydarkblue}{rgb}{0,0.3,0.9}

\newcommand{\first}[1]{\textbf{#1}}
\newcommand{\second}[1]{\underline{#1}}

\usepackage{natbib}
\usepackage{caption}
\usepackage{algorithm}
\usepackage{algorithmic}

\usepackage{newfloat}
\usepackage{listings}
\DeclareCaptionStyle{ruled}{labelfont=normalfont,labelsep=colon,strut=off}
\floatstyle{ruled}
\newfloat{listing}{tb}{lst}{}
\floatname{listing}{Listing}
\title{Contrastive Reinforced Policy Optimization via Privileged Self-Distillation}
\author{
    Xingjian Wu\equalcontrib,
    Junlin Liu\equalcontrib,
    Xingchen Liu\equalcontrib,
    Xuhang Zhu, Jianing Wang, \\ Linsen Guo\corresponding, Xiaoyu Li, Xuezhi Cao, Xunliang Cai
}
\affiliations{
    Meituan\\
    \{wuxingjian02, guolinsen\}@meituan.com
}

\begin{document}

\maketitle

\begin{abstract}
Recent advances in post-training Large Language Models (LLMs) increasingly rely on Reinforcement Learning with Verifiable Rewards (RLVR) or On-Policy Self-Distillation (OPSD).
While OPSD provides dense, logit-level supervision, it inherently suffers from exposure bias due to the privileged information of the self-teacher. 
In multi-turn agentic settings, this leads to reasoning route convergence and the loss of clear optimization directions. 
To tackle these challenges, we introduce Contrastive Reinforced Policy Optimization (CRPO), which reformulates agentic OPSD from a contrastive learning perspective. 
By leveraging predictive entropy to distinguish between positive positions (reflective exploration) and negative positions (exposure bias), CRPO conducts group-wise contrast to preserve reliable, fine-grained optimization signals. 
Extensive evaluations across 13 challenging reasoning and deep-search benchmarks demonstrate that CRPO consistently outperforms existing reinforcement learning and self-distillation baselines, significantly enhancing training stability and generalization in long-horizon interactions.
\end{abstract}

\section{Introduction}
Recent advances in post-training Large Language Models (LLM) gradually converge along two complementary paradigms. Among them,
Reinforcement Learning with Verifiable Rewards (RLVR)~\citep{dpsk-r1,ARPO} optimizes policies against verifiable outcome-level feedback, yielding accurate but sparse rewards, where a single scalar supervises an entire generation with coarse-grained credit assignment on intermediate steps.
To tackle this, On-Policy Distillation (OPD)~\citep{opdsurvey,rethinking-opd} and On-Policy Self-Distillation (OPSD)~\citep{OPSD,SDPO} supply dense, logit-level guidance from a stronger teacher model. We focus on OPSD, which provides a self-improved mechanism possessing both dense supervision and low computation cost. However, its guidance is inherently privileged, as the self-teacher model benefits from information advantage, and thus can be overconfident at positions where the student model is genuinely uncertain, leading to \emph{exposure bias}~\citep{RLSD}. This problem is further amplified in agentic post-training due to the multi-turn interactions with external environments, and we attribute it into two key phenomena: \textbf{[1] Self-teacher's reasoning route converges in interaction turns} and \textbf{[2] Optimization directions lose in multi-turn distillation}.

\begin{figure}[t]
    \centering
    \includegraphics[width=1\columnwidth]{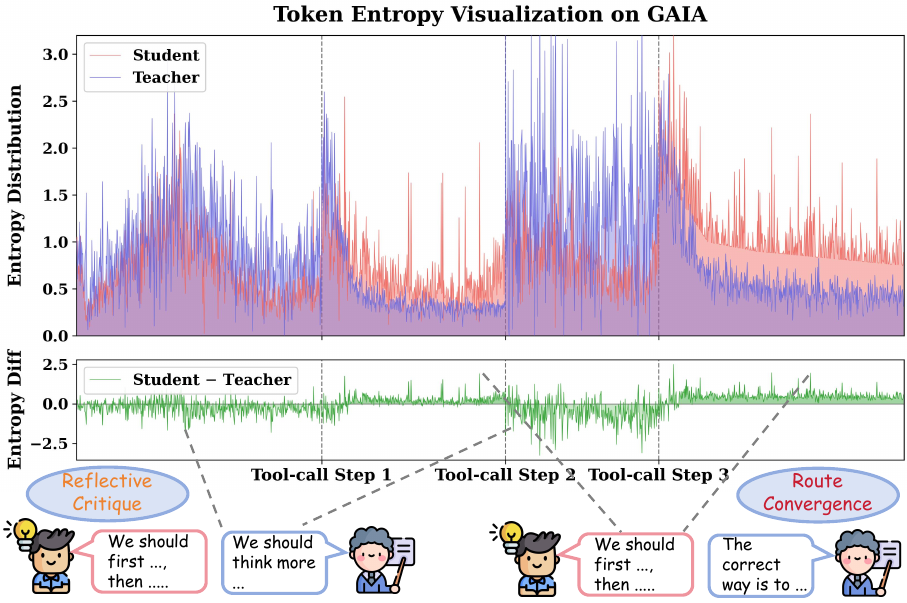}
    \caption{Position-level entropy along a student trajectory on a deep-search task in GAIA, compared with that of a privileged self-teacher prompted with a few demonstrations. A systematic statistical validation across all training rollouts is provided in Appendix~\ref{sec:entropy_collapse_stats}.}
    \label{fig: intro1}
\end{figure}

\textbf{[Phenomenon-1] Self-teacher's reasoning route converges in interaction turns.} Figure~\ref{fig: intro1} visualizes the position-level entropy along a student trajectory on a deep-search task, together with the entropy of a self-teacher obtained by prompting the same model with a few demonstrations.
After each tool call, the freshly introduced external information drives the student's entropy sharply upward. Under these circumstances with high uncertainty, it is observed that the privileged self-teacher \emph{may follow the reasoning routes of demonstrated cases to improve its confidence}, which leads to a significant entropy drop at these positions. This potentially hinders the model's generalization capability. 

\textbf{[Phenomenon-2] Optimization directions lose in multi-turn distillation.} Pure OPSD methods~\citep{OPSD,SDPO} utilize the KL-divergence to integrate the internalized reward signals from the privileged self-teacher, which is effective in single-turn tasks. However, in multi-turn long-horizon interactions, the reward signals are volunerable due to the introduction of external information. As above-mentioned, the teacher model is subjected to exposure bias, yielding to unreliable position-level supervision signals. As presented in Figure~\ref{fig: intro2}, compared with RLVR methods like GRPO~\citep{dpsk-r1} and ARPO~\citep{ARPO}, the reward curve of OPSD exhibits a fluctuating pattern while the KL-Divergence increases sharply, which indicates that the model is misled by the privileged self-teacher and fails to capture reward signals. 

\begin{figure}[t]
    \centering
    \includegraphics[width=1\columnwidth]{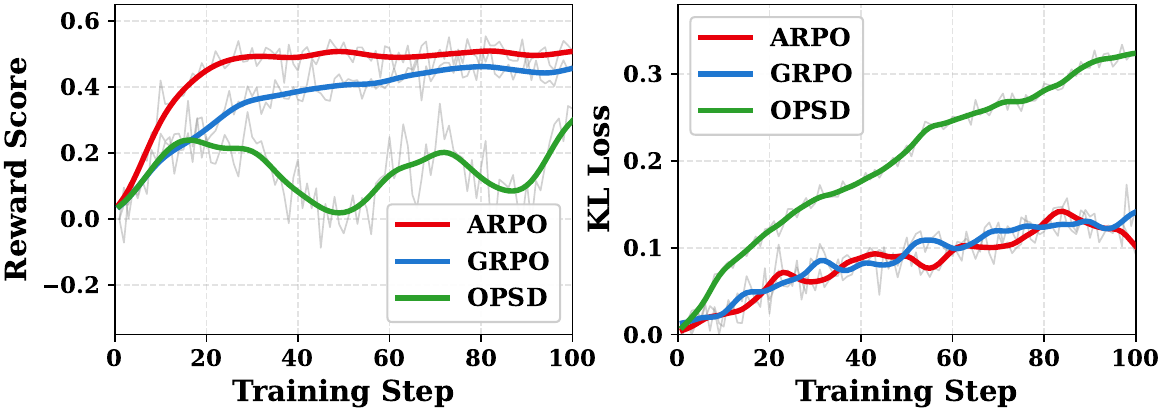}
    \caption{The Reward and KL-Divergence (token-average) of GRPO, ARPO, and OPSD during agentic post-training.}
    \label{fig: intro2}
\end{figure}

Based on the observations, we think the core problem is that the position-level supervision signals are unreliable in agentic post-training. To tackle above challenges, we think the pivots lie in: \textbf{[1] selectively distilling key positions and punishing ones with exposure bias}, and \textbf{[2] focusing on relatively advantageous positions to preserve clear optimization directions}. Though some existing studies~\citep{RLSD,SDAR,RLCSD} attempt to address the first problem, their solutions tend to rely on using self-distillation signals to control the update magnitude of RLVR, or to weightsum with RLVR's optimization objective. Since they take two separate frameworks into account, additional computational overhead is introduced. In contrast, we provide a solution to enhance the OPSD itself. 

To address the above-mentioned challenges, we introduce \textbf{CRPO}, a Contrastive Reinforced Policy Optimization method for agentic OPSD. In CRPO, we \emph{reformulate the OPSD framework from a contrastive learning perspective.} Specifically, we treat \ding{168} the original context and the context with privileged information as two views (i.e., the original view and the augmented view), \ding{169} the policy model as the encoder, \ding{170} the output logits as the representations, and \ding{171} the negative KL-divergence as the similarity function.
The objective is to perform distillation at positions that improve the model's generalization (positive pairs) while penalizing positions where exposure bias occurs (negative pairs).
To ensure clear optimization signals, we draw inspiration from the group-wise scheme of GRPO, conducting group-wise contrast across positions and encouraging relatively advantageous positions. Our contributions are summarized as:
\begin{itemize}[leftmargin=2em,itemsep=-0.1em]
\item[\ding{182}] We propose CRPO, an OPSD algorithm from a contrastive learning perspective, which enhances the training stability and performance in agentic post-training.
\item[\ding{183}] We use predictive-entropy to classify the positive and negative position and conduct group-wise contrast to preserve clear optimization directions. 
\item[\ding{184}] Beyond heurisitc motivation, the theoretical formula of CRPO is demonstrated equivalent to possess logit-wise credit assignment with a soft gate--see Appendix~\ref{sec:theoretical_analysis}.
\item[\ding{185}] Extensive evaluations on 13 challenging benchmarks show that CRPO consistently outperforms existing RL baselines across diverse reasoning and agentic tasks.
\end{itemize}

\section{Related Works}

\subsection{Agentic Reinforcement Learning}
Recently, Reinforcement Learning with Verifiable Rewards (RLVR)~\citep{lambert2025tulu3pushingfrontiers,dpsk-r1,DAPO} has achieved progress in enhancing reasoning tasks~\citep{we-math,dpsk-math,team2025qwq}.
Extending it to agentic settings, where models interact with external tools over multiple turns~\citep{ToolRL,ToolStar,CISPO}, introduces new challenges such as long-horizon credit assignment, high uncertainty after each tool call, and compounding errors across turns.
Recent efforts attempt to address these from several angles: adaptive exploration strategies~\citep{ARPO,AEPO} that exploit entropy spikes after tool interactions to encourage step-level sampling at high-uncertainty positions, and step-wise advantage attribution that internalizes the contribution of individual tool-use actions within the trajectory-level reward~\citep{otc,search-r1}.
Despite these advances, existing agentic RL methods rely solely on sparse outcome-level rewards and lack dense token-level signals, limiting sample efficiency and intermediate-step guidance.

\subsection{On-Policy Self-Distillation}
On-policy distillation (OPD)~\citep{opdsurvey,rethinking-opd,uni-opd,rubric-opd} provides dense, logit-level supervision by aligning the student's distribution with a stronger teacher at every position.
On-policy self-distillation (OPSD)~\citep{OPSD,SDPO,TCOD} simplifies this by letting a single model serve as both student and teacher, with the teacher granted privileged information to produce the dense signal at no extra model cost.
OPSD~\citep{OPSD} first introduces this self-distilled framework for single-turn reasoning.
SDPO~\citep{SDPO} frames self-distillation as an RL objective, using the teacher-student KL divergence as a dense reward signal.

Several works further explore hybrid methods combining OPSD with RLVR.
RLSD~\citep{RLSD} uses the self-distillation signal to modulate RLVR policy gradient updates.
SDAR~\citep{SDAR} jointly optimizes an RL objective and a distillation objective in multi-turn agentic settings. Skill-SD~\citep{Skill-SD} applies skill as the privileged information in agentic settings.
RLCSD~\citep{RLCSD} introduces contrastive signals to distinguish stylistic differences between successful and unsuccessful trajectories in reasoning tasks.
HDPO~\citep{HDPO} combines multiple supervision sources via privileged self-distillation into a unified policy optimization objective.
Our proposed CRPO, reformulates OPSD from a contrastive perspective~\citep{SimCLR,MoCo}, controling fine-grained distillation and preserving clear advantageous signals.

\begin{figure*}[!htbp]
    \centering
    \includegraphics[width=0.97\linewidth]{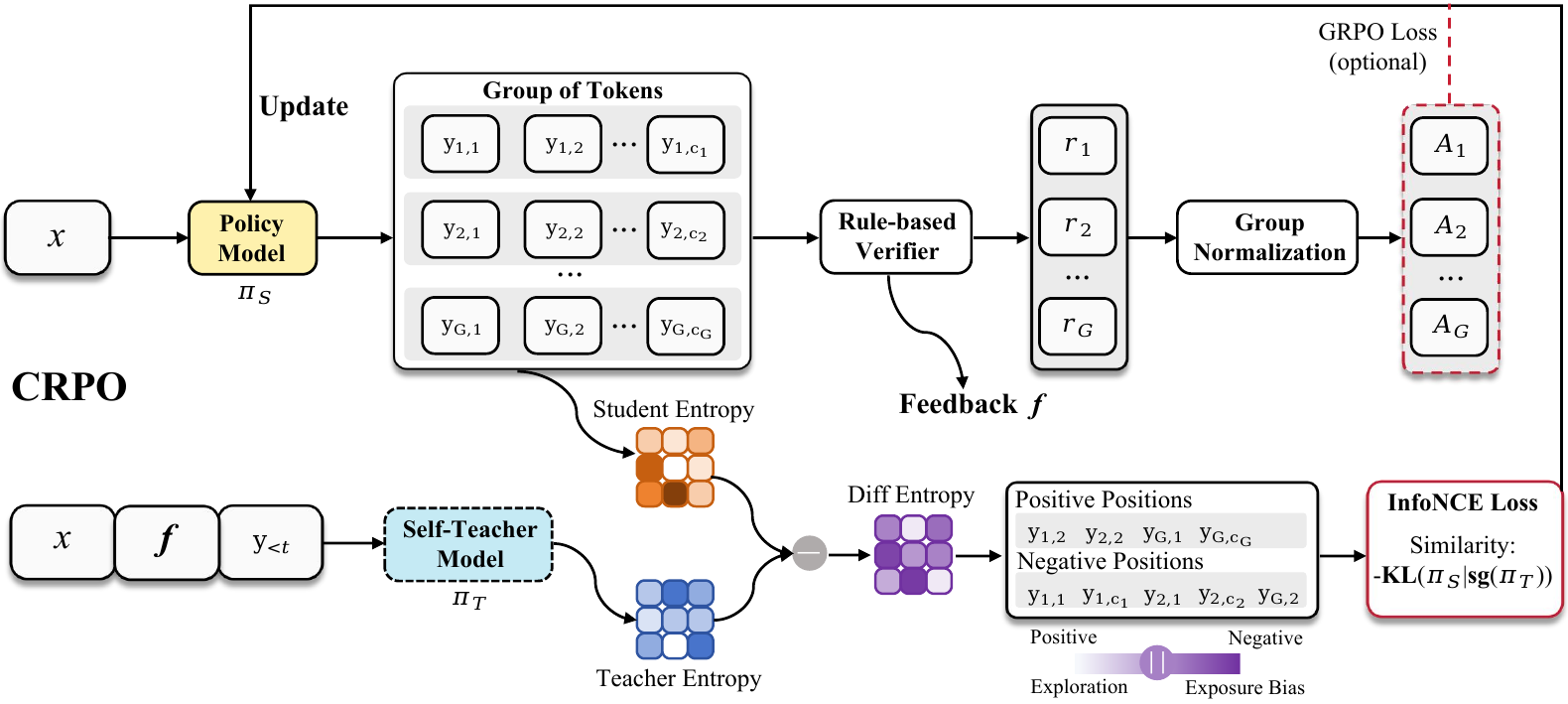}
    \caption{The algorithm pipeline of the proposed Contrastive Reinforced Policy Optimization (CRPO).}
    \label{fig:overview}
\end{figure*}

\section{Methodology} 

\subsection{Problem Definition}
We illustrate CRPO with a classical RLVR pipeline in Figure~\ref{fig:overview}. Given an initial task description $x$, we consider the agentic scenario where an LLM $\pi_\theta$ interacts with the environment through external tools, obtains observations from the tools, and then makes responses. Note that this process involves external tokens (e.g., searched information), which only serve as the context and do not provide optimization signals, so that we only formulate the valid response tokens for notational simplicity:
\begin{gather}
    y_i = (y_{i,1}, \cdots, y_{i,t}, \cdots, y_{i,c_i}) \sim \pi_\theta (\cdot | x),
\end{gather}
where $y_i$ denotes the sampled sequence and $c_i$ denotes the length. Total $\mathrm{G}$ rollouts are sampled per question $x$. In the context of OPSD, the student and teacher share the same policy model $\pi_\theta$. At each position $t$, the student model's context of the $i$-th sampled rollout is defined as:
\begin{gather}
    S_{i, t} = (x, y_{i, <t}),
\end{gather}
and the self-teacher model's context is formulated as:
\begin{gather}
    T_{i, t} = (x, f, y_{i, <t}),
\end{gather}
\begin{table}[b]
\centering
\small
\renewcommand{\arraystretch}{1.1}
\begin{tabular}{@{}p{1.2cm}p{6.3cm}@{}}
\toprule
\textbf{User:} & \texttt{<question>} \\
               & Reference solution: \texttt{<ref\_trajectory>} \\
               & Environmental info: \texttt{<env\_feedback>} \\
\midrule
\textbf{Assistant:} & \texttt{<student\_response>} \\
\bottomrule
\end{tabular}
\caption{Self-teacher prompt template.}
\label{tab:self-teacher-template}
\end{table}
where $f$ is the feedback, which is organized from the student's own rollouts, covering the reference successful trajectories from the same group, and the environmental feedbacks such as the compile information of code interpreter. The structure of self-teacher prompt is shown in Table~\ref{tab:self-teacher-template}. 

\subsection{Algorithm Framework}
We first revisit the classical OPSD~\citep{OPSD,SDPO} optimization objective, which adopts a reverse KL-divergence and stop the gradient flow from self-teacher to avoid privileged information leakage:
\begin{gather}
    \mathcal{L}_{\text{OPSD}}(\theta) = \frac{1}{\mathrm{G}} \sum_{i,t} \text{KL}(\pi_\theta(\cdot | S_{i,t})||\text{sg}(\pi_\theta(\cdot | T_{i,t}))),
\end{gather}
where sg($\cdot$) denotes the operation of stopping gradient. In original $\mathcal{L}_{\text{OPSD}}$, the student model consistently imitates the teacher’s behavior at all positions, which suffers from the exposure bias. In agentic settings, the self-teacher may also lead to reasoning route convergence, hindering the generalization capability. From our perspective, the intensity and direction of self-distillation at different positions both need to be adjusted to mitigate the issues mentioned above. 

We summarize some key observations:
\begin{tcolorbox}[colback=white,colframe=black,boxrule=0.8pt,arc=2pt,left=4pt,right=4pt,top=2pt,bottom=2pt,before skip=4pt,after skip=4pt]
\textbf{On-Policy Self-Distillation uses a single policy as the shared encoder, treats the original and privileged contexts as two views, and minimizes a distance function (KL-divergence) between their output distributions.}
\end{tcolorbox}
Inspired by classical contrastive learning~\citep{MoCo,SimCLR} frameworks, we observe that OPSD originally shares similarity to contrastive learning, but \emph{consistently treats the two views at each position as positive pairs.} Based on our motivation, not all positions are beneficial for distillation, and reasonably distinguishing positive and negative pairs may improve the robustness of optimization and enhance generalization.

Specifically, we reformulate the OPSD from a contrastive learning perspective in Figure~\ref{fig:contrast}. The core modification is to design a Judger to distinguish positive and negative positions in self-distillation, and it also utilizes the group-wise relative information. The details are presented in Figure~\ref{fig:overview}.

\begin{figure}[t]
    \centering
    \includegraphics[width=1\linewidth]{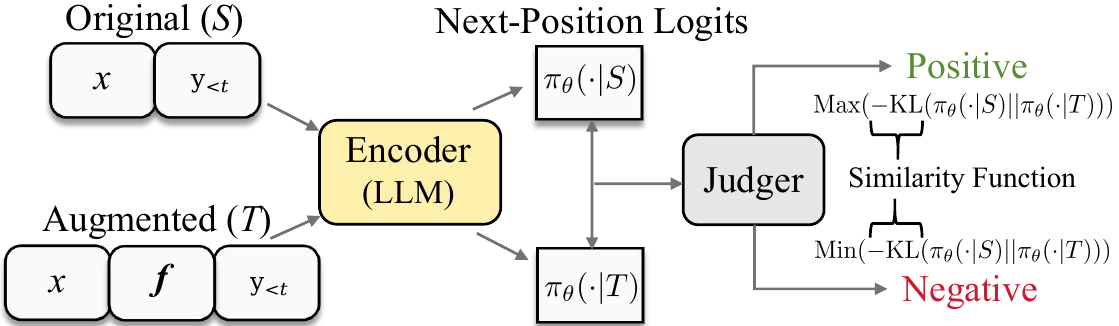}
    \caption{The CRPO pipeline, understanding from a contrastive learning perspective.}
    \label{fig:contrast}
\end{figure}

As illustrated in Figure~\ref{fig:contrast}, we rewrite CRPO in a constrative learning paradigm:
\begin{tcolorbox}[colback=white,colframe=black,boxrule=0.8pt,arc=2pt,left=4pt,right=4pt,top=2pt,bottom=2pt,before skip=4pt,after skip=4pt]
\ding{168} \textbf{Views:} student's context (original) and self-teacher's privileged context (augumented). \\
\ding{169} \textbf{Encoder:} the same LLM-based policy model. \\
\ding{170} \textbf{Representations:} the output logits at each position.\\
\ding{171} \textbf{Similarity Function:} the negative KL-divergence.
\end{tcolorbox}

To determine the positive and negative pairs, we utilize the predictive entropy~\citep{entropy} as a key metric, since it can reflect the position-level uncertainty of LLMs~\citep{ARPO,AEPO}, and help identify where potential exposure bias may be occurring. At position $t$ of the $i$-th rollout, we calculate the entropy difference $\Delta \mathrm{E}_{i,t}$ between the student and the self-teacher models:
\begin{gather}
    \mathrm{E}^{\text{S}}_{i,t} = \textstyle\sum_{\hat{y}_t} \pi_\theta(\hat{y}_t|S_{i,t}) \log \pi_\theta(\hat{y}_t|S_{i,t}),\\
    \mathrm{E}^{\text{T}}_{i,t} = \textstyle\sum_{\hat{y}_t} \pi_\theta(\hat{y}_t|T_{i,t}) \log \pi_\theta(\hat{y}_t|T_{i,t}),\\
    \Delta \mathrm{E}_{i,t} = \mathrm{E}^{\text{S}}_{i,t} - \mathrm{E}^{\text{T}}_{i,t},
\end{gather}
where the entropy difference $\Delta \mathrm{E}_{i,t}$ is treated as the key indicator to reflect the degree of fluctuation in uncertainty, which helps determine whether the self-teacher has merely followed the reference solutions contained in the privileged information (high $\Delta \mathrm{E}_{i,t}$), or whether it can properly leverage it to engage in reflective exploration (low $\Delta \mathrm{E}_{i,t}$). To make it more clear, we list the common cases in Table~\ref{tab:entropy-cases}:

\begin{table}[!htbp]
\centering
\renewcommand{\arraystretch}{1.2}
\begin{tabular}{@{}ll@{}}
\toprule
\textbf{Case} & \textbf{Interpretation} \\
\midrule
$\mathrm{E}^{\text{S}}_{i,t} \ll \mathrm{E}^{\text{T}}_{i,t}$ & Reflective critique (encouraging exploration) \\
$\mathrm{E}^{\text{S}}_{i,t} \approx \mathrm{E}^{\text{T}}_{i,t}$ & Borderline \\
$\mathrm{E}^{\text{S}}_{i,t} \gg \mathrm{E}^{\text{T}}_{i,t}$ & Exposure bias (route convergence) \\
\bottomrule
\end{tabular}
\caption{Common cases of entropy difference between the student and self-teacher.}
\label{tab:entropy-cases}
\end{table}
Despite the absolute numerical value of $\Delta \mathrm{E}_{i,t}$ can reflect the position advantages to some extent, we also introduce the group relative mechanism to further preserve the clear optimization directions from borderline cases. Specifically, we rank the entropy difference among all the positions of $\mathrm{G}$ rollouts, and define positive and negative pairs:
\begin{gather}
    \mathcal{P} = \{(i,t) \mid \Delta \mathrm{E}_{i,t} \leq \text{Bottom}_{p\%}(\{\Delta \mathrm{E}_{i,t}\}_{i=1, t=1}^{\mathrm{G}, c_i})\},\\
    \mathcal{N} = \{(i,t) \mid (i,t) \notin \mathcal{P}\},
\end{gather}
where $p$ denotes the proportion of positions selected as positive pairs, $\mathcal{P}$ is the positive set containing positions where the self-teacher engages in reflective exploration rather than simply copying the reference, and $\mathcal{N}$ is the negative set comprising the remaining positions that are more likely to suffer from exposure bias.

\subsection{Optimization Objective}
After defining the positive and negative pairs, we formulate the contrastive self-distillation objective. We adopt the negative KL-divergence as the similarity function between the student and the self-teacher at each position:
\begin{gather}
    \text{sim}(i,t) = -\text{KL}(\pi_\theta(\cdot | S_{i,t}) \| \text{sg}(\pi_\theta(\cdot | T_{i,t})))
\end{gather}
Following the InfoNCE~\citep{SimCLR,MoCo} framework, we place positive pairs in the numerator and all pairs in the denominator, which yields the contrastive self-distillation loss:
\begin{gather}
    \mathcal{L}_{\text{CRPO}}(\theta) = -\log \frac{\sum_{(i,t) \in \mathcal{P}} \exp(\text{sim}(i,t) / \tau)}{\sum_{(i,t) \in \mathcal{P} \cup \mathcal{N}} \exp(\text{sim}(i,t) / \tau)},
\end{gather}
where $\tau$ is the temperature hyperparameter. Minimizing $\mathcal{L}_{\text{CRPO}}$ reduces the KL divergence at positive positions and enlarges it at negative ones, which encouges exploration at pivotal positions, and mitigates blind imitation of exposure-biased predictions. The gradient of $\mathcal{L}_{\text{CRPO}}$ admits the form of a contrastively reweighted policy gradient--see Appendix~\ref{sec:crpo_gradient_proof} for the full derivation.
\begin{proposition}\label{prop:crpo_gradient}
The gradient of $\mathcal{L}_{\text{CRPO}}$ is
\begin{align}
    &\nabla_\theta \mathcal{L}_{\text{CRPO}}(\theta) = \mathbb{E}_{y_i \sim \pi_\theta(\cdot | x)} \bigg[ -\frac{1}{\tau} \sum_{i,t} c_{i,t} \notag \\
    &\cdot \mathbb{E}_{\hat{y}_t \sim \pi_\theta(\cdot | S_{i,t})} \Big[ \hat{A}_{i,t} \nabla_\theta \log \pi_\theta(\hat{y}_t | S_{i,t}) \Big] \bigg],
\end{align}
where $\hat{A}_{i,t} := \log \frac{\text{sg}(\pi_\theta(\hat{y}_t | T_{i,t}))}{\pi_\theta(\hat{y}_t | S_{i,t})}$ denotes the per-token advantage evaluated at a single sample $\hat{y}_t$, with $c_{i,t} \leq 0$ for positive pairs and $c_{i,t} \geq 0$ for negative pairs.
\end{proposition}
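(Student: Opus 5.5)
The plan is to differentiate $\mathcal{L}_{\text{CRPO}}$ in two layers: an outer softmax layer that produces scalar weights $c_{i,t}$, and an inner KL layer that produces the score-function term. Throughout I treat $\mathcal{P}$, $\mathcal{N}$ and the rollouts $y_i$ as fixed. The partition is computed from entropies and ranks, which carry no gradient, and the outer $\mathbb{E}_{y_i\sim\pi_\theta}$ is the on-policy sampling that is not differentiated, as in OPSD. I write $s_{i,t}:=\text{sim}(i,t)$ and introduce two softmax weights:
\begin{gather*}
q_{i,t}=\frac{\exp(s_{i,t}/\tau)}{\sum_{(j,u)\in\mathcal{P}\cup\mathcal{N}}\exp(s_{j,u}/\tau)},\qquad
r_{i,t}=\frac{\exp(s_{i,t}/\tau)}{\sum_{(j,u)\in\mathcal{P}}\exp(s_{j,u}/\tau)}.
\end{gather*}
Here $q_{i,t}$ is normalised over all positions and $r_{i,t}$ only over positives.

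\textbf{Outer layer.} Write $\mathcal{L}_{\text{CRPO}}=-\log\sum_{\mathcal{P}}e^{s/\tau}+\log\sum_{\mathcal{P}\cup\mathcal{N}}e^{s/\tau}$. Differentiating term by term gives
\begin{gather*}
\nabla_\theta\mathcal{L}_{\text{CRPO}}=\tfrac{1}{\tau}\textstyle\sum_{i,t}c_{i,t}\,\nabla_\theta s_{i,t},\qquad c_{i,t}:=q_{i,t}-r_{i,t}\mathbb{1}[(i,t)\in\mathcal{P}].
\end{gather*}
The sign pattern follows at once from the two denominators. For $(i,t)\in\mathcal{P}$, the full denominator dominates the positive-only one, so $q_{i,t}\le r_{i,t}$ and $c_{i,t}\le 0$. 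For $(i,t)\in\mathcal{N}$, the indicator vanishes, so $c_{i,t}=q_{i,t}\ge 0$. This is exactly the pattern claimed in Proposition~\ref{prop:crpo_gradient}.

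\textbf{Inner layer.} Next I need $\nabla_\theta s_{i,t}=-\nabla_\theta\text{KL}(\pi_\theta(\cdot|S_{i,t})\,\|\,\text{sg}(\pi_\theta(\cdot|T_{i,t})))$. Using the stop-gradient on the teacher, I expand
\begin{gather*}
\textstyle\sum_{\hat y}\nabla\pi_S(\hat y)\,\big(\log\pi_S(\hat y)-\log\pi_T(\hat y)\big)+\sum_{\hat y}\nabla\pi_S(\hat y).
\end{gather*}
The last sum vanishes because $\sum\pi_S=1$. Converting $\nabla\pi_S=\pi_S\nabla\log\pi_S$ turns the first sum into an expectation over $\hat y_t\sim\pi_\theta(\cdot|S_{i,t})$ of $-\hat{A}_{i,t}\nabla\log\pi_\theta(\hat y_t|S_{i,t})$, with $\hat{A}_{i,t}=\log(\text{sg}(\pi_T)/\pi_S)$ as defined in the statement. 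Substituting this into the outer layer and restoring the on-policy expectation gives the stated form: prefactor $\tfrac{1}{\tau}$, weights $c_{i,t}$, and inner expectation $\mathbb{E}_{\hat y_t}[\hat A_{i,t}\nabla\log\pi_\theta]$.

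\textbf{Main obstacle: the overall sign.} The delicate step is the global orientation, i.e.\ whether the inner step contributes $+\mathbb{E}[\hat A\nabla\log\pi]$ or $-\mathbb{E}[\hat A\nabla\log\pi]$ to $\nabla s_{i,t}$ when combined with the $-\tfrac{1}{\tau}$ prefactor. A direct count gives $\nabla s_{i,t}=+\mathbb{E}[\hat A_{i,t}\nabla\log\pi_S]$, because $s$ is a negated KL and $\nabla\text{KL}=-\mathbb{E}[\hat A\nabla\log\pi_S]$. If so, the prefactor in front of $\sum c_{i,t}\mathbb{E}[\cdots]$ comes out as $+\tfrac{1}{\tau}$, not $-\tfrac{1}{\tau}$.

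I will recheck this term very carefully against the statement's choice of $\hat A$. If the mismatch persists, the consistent reading is that the $-\tfrac{1}{\tau}$ multiplies the KL-gradient form $\nabla\text{KL}=-\mathbb{E}[\hat A\nabla\log\pi_S]$. Under that reading, $c_{i,t}=q_{i,t}-r_{i,t}\mathbb{1}_{\mathcal{P}}$ keeps the stated pattern: $c_{i,t}\le 0$ on positives and $c_{i,t}\ge 0$ on negatives. I would make this convention explicit rather than flip the sign of $c_{i,t}$, since flipping it would break the pattern being proved.

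As a sanity check, I would confirm that on positives gradient descent lowers the KL toward the teacher, and on negatives it raises it. Matching this behaviour is the criterion for settling the sign convention.
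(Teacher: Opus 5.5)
Your two-layer computation is the paper's proof. Step~1 of the appendix writes $\mathcal{L}_{\text{CRPO}}$ as a difference of log-sum-exps and obtains $\nabla_\theta\mathcal{L}_{\text{CRPO}}=\frac{1}{\tau}\sum_{i,t}c_{i,t}\nabla_\theta\text{sim}(i,t)$. It uses exactly your weights (your $q_{i,t},r_{i,t}$ are its $w^{\text{all}}_{i,t},w^{\mathcal{P}}_{i,t}$) and the same denominator comparison for the signs. Step~2 obtains $\nabla_\theta\text{sim}(i,t)=+\mathbb{E}_{\hat{y}_t\sim\pi_\theta(\cdot|S_{i,t})}[\hat{A}_{i,t}\nabla_\theta\log\pi_\theta(\hat{y}_t|S_{i,t})]$ by the product rule on $-\sum_{\hat{y}_t}\pi_\theta(\hat{y}_t|S_{i,t})\hat{A}_{i,t}$, which is the same calculation as your direct expansion. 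Your explicit remark that the rollouts and the partition are held fixed, so the outer expectation is not differentiated, clarifies the paper's bare \emph{we take the outer expectation}.

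The sign mismatch you isolate is real, and you should commit to it rather than hedge. The paper also reaches your formula with prefactor $+\frac{1}{\tau}$. It then passes to the displayed $-\frac{1}{\tau}$ form, keeping the same $c_{i,t}$ and $\hat{A}_{i,t}$, by \emph{noting that $c_{i,t}\leq 0$ for positive pairs, we equivalently write with a negation}. That step multiplies a generally nonzero vector by $-1$ and is not an equivalence.

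Nor can a different sign-respecting choice of coefficients rescue the statement. When $\mathcal{N}\neq\emptyset$, the weights satisfy $c_{i,t}<0$ on $\mathcal{P}$ and $c_{i,t}>0$ on $\mathcal{N}$ strictly. Take a toy model with one positive and one negative position whose student logits are separate parameters. The two vectors $\nabla_\theta\text{sim}(i,t)$ are then nonzero (as soon as student and teacher differ there) and linearly independent. This forces the coefficients in the $-\frac{1}{\tau}$ form to equal $-c_{i,t}$, which have the wrong signs.

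So the proposition as written is off by a global sign and admits no proof. What you can prove is $\nabla_\theta\mathcal{L}_{\text{CRPO}}=\frac{1}{\tau}\sum_{i,t}c_{i,t}\,\mathbb{E}_{\hat{y}_t\sim\pi_\theta(\cdot|S_{i,t})}[\hat{A}_{i,t}\nabla_\theta\log\pi_\theta(\hat{y}_t|S_{i,t})]$. Equivalently, keep the $-\frac{1}{\tau}$ form but replace $\hat{A}_{i,t}$ by $\log\big(\pi_\theta(\hat{y}_t|S_{i,t})/\text{sg}(\pi_\theta(\hat{y}_t|T_{i,t}))\big)$. Your proposed reading is exactly this second version, so state it as a correction of the proposition, not as a convention.

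Your sanity check is the right arbiter, and it agrees with the paper's own interpretation paragraph. Under $\theta\leftarrow\theta-\eta\nabla_\theta\mathcal{L}_{\text{CRPO}}$ with the $+\frac{1}{\tau}$ form, position $(i,t)$ contributes $-\eta\frac{c_{i,t}}{\tau}\nabla_\theta\text{sim}(i,t)$. To first order this raises the similarity (lowers the KL) on $\mathcal{P}$ and lowers it on $\mathcal{N}$, whereas the literal $-\frac{1}{\tau}$ formula would do the reverse.
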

Proposition~\ref{prop:crpo_gradient} indicates that $\nabla_\theta \mathcal{L}_{\text{CRPO}}$ takes the form of a per-token policy gradient. Each position $(i,t)$ contributes a score-function term $\hat{A}_{i,t}\,\nabla_\theta \log \pi_\theta(\hat{y}_t \mid S_{i,t})$, in which the logit-level advantage $\hat{A}_{i,t}$ is supplied by the self-teacher rather than by a sparse trajectory-level reward.

The coefficient $c_{i,t}$ acts as a 
\emph{soft gate} on this gradient. Its sign determines the optimization direction: positive pairs drive the student toward the teacher, whereas negative pairs drive it away. Its magnitude is given by softmax-normalized similarities (see Appendix~\ref{sec:crpo_gradient_proof}), so that positions with more discriminative similarities receive larger weights within each rollout group.

The loss $\mathcal{L}_{\text{CRPO}}$ already provides a complete logit-level optimization signal and can be used as a standalone objective, which is the configuration adopted in our main experiments. It is also compatible with outcome-level RL objectives such as GRPO~\citep{dpsk-r1}, leading to a combined variant denoted as CRPO$^{*}$:
\begin{gather}
    \mathcal{L}_{\text{CRPO}^{*}}(\theta) = \mathcal{L}_{\text{GRPO}}(\theta) + \lambda \mathcal{L}_{\text{CRPO}}(\theta),
\end{gather}
\begin{table*}[t]
\centering
\setlength\tabcolsep{3.5pt}
\renewcommand{\arraystretch}{0.72}
\fontsize{9pt}{10.5pt}\selectfont
\begin{tabular}{p{3cm}ccccccccccc}
\toprule
\multirow{2}[2]{*}{\textbf{Method}} & \multicolumn{5}{c}{\textbf{Mathematical Reasoning}} & \multicolumn{5}{c}{\textbf{Knowledge-Intensive Reasoning}} & \multirow{2}[2]{*}{\textbf{Avg.}} \\
\cmidrule(lr){2-6} \cmidrule(lr){7-11}
& AIME24 & AIME25 & MATH500 & GSM8K & MATH & WebWalker & HQA & 2Wiki. & MuSiQ. & Bamb. & \\
\midrule
\textbf{Qwen2.5-3B-Instruct} & 10.0 & 6.7 & 63.0 & 75.0 & 71.6 & 0.5 & 9.7 & 9.4 & 3.6 & 11.7 & 26.1 \\
\quad + TIR Prompting                 & 6.7 & 6.7 & 52.2 & 56.6 & 62.8 & 14.0 & 15.4 & 14.1 & 6.1 & 16.4 & 25.1 \\
\quad + GRPO                 & 20.0 & 13.3 & \first{72.0} & \second{86.0} & 81.0 & 21.0 & 56.5 & 64.5 & 24.7 & 65.2 & 50.4 \\
\quad + ARPO                 & \second{23.3} & \first{20.0} & 71.4 & 85.0 & 82.5 & 24.5 & 58.5 & 67.4 & 28.7 & 66.8 & 52.8 \\
\quad + OPSD & 16.7 & 6.7 & 54.0 & 74.2 & 73.6 & 1.5 & 2.4 & 12.0 & 7.5 & 12.0 & 26.1  \\
\quad + SDPO                 & 16.7 & \second{16.7} & 71.2 & 85.0 & 81.0 & 17.8 & 52.4 & 62.5 & \second{30.0} & 63.6 & 49.7 \\
\quad + RLSD & 20.0 & 13.3 & 68.2 & 82.4 & 78.8 & 22.6 & 40.4 & 40.2 & 16.5 & \second{67.2} & 45.0 \\
\quad + \textbf{CRPO (Ours)} & \first{26.7} & \second{16.7} & 70.2 & 85.6 &  \second{83.2} & \second{28.0} & \first{60.4} & \first{70.2} & \second{30.0} & 66.8 & \second{53.8}\\
\quad + \textbf{CRPO* (Ours)} & \first{26.7} & \first{20.0} & \second{71.8} & \first{86.2} &  \first{83.6} & \first{28.6} & \second{59.8} & \second{68.6} & \first{33.3} & \first{70.4} & \first{54.9}\\

\midrule
\textbf{Llama3.1-8B-Instruct} & 3.3 & 0.0 & 43.3 & 81.4 & 60.6 & 3.0 & 24.3 & 24.6 & 10.4 & 40.0 & 29.1 \\
\quad + TIR Prompting        & 3.3 & 3.3 & 39.4 & 73.8 & 58.2 & 15.0 & 48.5 & 47.5 & 15.5 & 58.4 & 36.3 \\
\quad + GRPO                 & 13.3 & 13.3 & 62.4 & 87.4 & 79.2 & 26.5 & 57.8 & 71.8 & 31.0 & 68.2 & 51.1 \\
\quad + ARPO                 & \second{23.3} & 16.7 & 64.6 & 88.0 & \second{80.2} & 30.5 & \second{65.4} & 75.5 & 34.8 & 73.8 & 55.3 \\
\quad + OPSD & 13.3 & 10.0 & 52.6 & 77.6 & 62.4 & 15.4 & 3.2 & 18.8 & 12.4 & 16.2 & 28.2 \\
\quad + SDPO                 & 16.7 & 13.3 & 62.2 & 86.8 & 76.0 & 24.5 & 56.8 & 72.3 & 29.6 & 67.0 & 50.5  \\
\quad + RLSD &  20.0 & 13.3 &  66.4 & 78.2 & 79.2 & 22.4 & 46.8 & 50.2 & 22.6 & 73.0  & 47.2  \\
\quad + \textbf{CRPO (Ours)}  & \first{26.7} & \second{20.0} &  \second{68.0} & \second{88.2} & \second{80.2} & \second{31.8} & 64.8 & \second{78.2} & \second{35.0} & \first{75.6} & \second{56.9}     \\
\quad + \textbf{CRPO* (Ours)}  & \first{26.7} & \first{23.3} &  \first{71.4} & \first{88.6} & \first{80.8} & \first{35.2} & \first{66.8} & \first{81.2} & \first{35.2} & \second{75.4} & \first{58.5}     \\

\midrule
\textbf{Qwen2.5-7B-Instruct} & 10.0 & 10.0 & 70.6 & 90.2 & 82.0 & 2.0 & 12.2 & 12.6 & 6.6 & 24.0 & 32.0 \\
\quad + TIR Prompting        & 6.7 & 10.0 & 68.2 & 64.6 & 78.2 & 15.5 & 14.8 & 18.3 & 9.5 & 23.6 & 31.0 \\
\quad + GRPO                 & 23.3 & 26.7 & 78.0 & \first{92.8} & 87.8 & 22.0 & 59.0 & 76.1 & 30.6 & 68.4 & 56.5 \\
\quad + ARPO                 & \second{30.0} & \second{30.0} & 78.8 & 92.2 & \first{88.8} & 26.0 & 58.8 & 76.1 & \second{31.1} & 71.5 & 58.3 \\
\quad + OPSD & 20.0 & 16.7 & 72.8 & 74.8 & 84.2 & 14.6 & 2.5 & 20.0 & 14.5 & 22.4 & 34.3 \\
\quad + SDPO                 & 23.3 & 26.7 & \first{80.4} & 90.6 & \first{88.8} & 23.8 & 58.4 & 68.4 & 28.4 & 65.0 & 55.4 \\
\quad + RLSD & \second{30.0} & 26.7 & \second{79.0} & 91.2 & 86.8 & 24.6 & 45.5 & 48.9 & 21.5 & 73.0 & 52.7  \\
\quad + \textbf{CRPO (Ours)} & \second{30.0} & \first{33.3} & \first{80.4} & \second{92.4} & \second{88.4} & \second{30.0} & \second{59.8} & \second{77.4} & 30.8 & \second{73.5} & \second{59.6}  \\
\quad + \textbf{CRPO* (Ours)} & \first{33.3} & \first{33.3} & \first{80.4} & \first{92.8} & \first{88.8} & \first{32.0} & \first{62.6} & \first{78.2} & \first{31.5} & \first{74.0} & \first{60.7}  \\

\bottomrule
\end{tabular}
\caption{Performance on 10 challenging reasoning tasks. The best and second-best results are \textbf{bolded} and \underline{underlined}. Abbreviations: HQA (HotpotQA), 2Wiki. (2WikiMultiHopQA), MuSi. (MuSiQue), Bamb. (Bamboogle).}
\label{tab:main_table}
\end{table*}
where $\lambda$ controls the strength of the contrastive self-distillation. From a structural perspective, CRPO$^{*}$ preserves the trajectory-level reward optimization of GRPO and replaces only its regularizer: the constraint that anchors the policy to a frozen reference model is replaced by a position-aware contrastive KL coupling with the self-teacher through $\mathcal{L}_{\text{CRPO}}$. The outcome-level skeleton of GRPO is left unchanged, and the regularization target is upgraded from a static reference to the self-teacher. A component-wise comparison between CRPO$^{*}$ and classical GRPO is provided in Appendix~\ref{sec:crpo_star_vs_grpo}.

\section{Experiments}

\subsection{Experimental Setup}

\paragraph{Datasets.}
We evaluate CRPO on two families of long-horizon reasoning benchmarks. \textbf{(1) Mathematical \& Knowledge-Intensive Reasoning} includes the mathematical tasks AIME24, AIME25, MATH500~\citep{math500}, MATH~\citep{MATH}, and GSM8K, together with the knowledge-intensive tasks WebWalker~\citep{2501_WebWalker}, HotpotQA~\citep{hotpotqa}, 2WikiMultihopQA~\citep{2wiki}, MuSiQue~\citep{musique}, and Bamboogle~\citep{bamboogle}. \textbf{(2) Deep Search} includes GAIA~\citep{GAIA}, WebWalkerQA~\citep{2501_WebWalker}, Humanity's Last Exam (HLE)~\citep{HLE}, and the xbench-DeepSearch split~\citep{chen2025xbench,li2025websailor}. For consistency with prior work, we follow Tool-Star~\citep{dong2025toolstar} for the math and knowledge splits, and WebThinker~\citep{Webthinker} for the deep-search splits.

\paragraph{Baselines.}
We compare CRPO against three groups of baselines on the same backbones. The \textbf{direct-reasoning} group instantiates the instruct versions of Qwen2.5~\citep{qwen2.5} and Llama3.1~\citep{llama3}, with Qwen3~\citep{qwen3} used for deep search; we also report TIR prompting~\citep{search-o1} as a tool-use prompting baseline. The \textbf{trajectory-level RL} group includes GRPO~\citep{dpsk-r1} and ARPO~\citep{ARPO}. The \textbf{self-distillation} group includes OPSD~\citep{OPSD}, SDPO~\citep{SDPO}, and RLSD~\citep{RLSD}, which share the on-policy self-distillation formulation that CRPO extends.

\paragraph{Training Protocol.}
We adopt a cold-start SFT followed by RL pipeline~\citep{dong2025toolstar} to avoid reward collapse in the early RL phase. The cold-start phase uses the 54K Tool-Star corpus augmented with the 0.8K STILL set~\citep{cort}, fine-tuned with LLaMA-Factory~\citep{zheng2024llamafactory}. For the RL phase, we use the 10K Tool-Star samples for the math/knowledge tasks and 1K mixed hard samples drawn from SimpleDeepSearcher~\citep{SimpleDeepSearcher} and WebSailor~\citep{li2025websailor} for deep search. Tool interactions use top-10 Bing snippets and a sandboxed Python interpreter, with token-level F1 as the rule-based correctness signal.

\paragraph{Evaluation.}
For knowledge-intensive QA we report token-level F1; remaining tasks are scored via LLM-as-Judge with Qwen2.5-72B-Instruct. We use pass@1 with temperature $0.6$ and top-$p$ $0.95$, and extract answers from \texttt{\textbackslash box\{\}} following~\citet{search-o1}. Deep-search evaluation additionally enables a browser-equipped search engine.

\begin{table*}[t]
\centering
\setlength\tabcolsep{3.5pt}
\renewcommand{\arraystretch}{0.72}
\fontsize{9pt}{11.5pt}\selectfont
\begin{tabular}{
    p{2.8cm} % Method
    *{4}{>{\centering\arraybackslash}p{0.8cm}} % General AI Assistant
    *{4}{>{\centering\arraybackslash}p{0.8cm}}  % WebWalker (Easy, Med, Hard, Avg)
    *{4}{>{\centering\arraybackslash}p{0.75cm}} % Humanity's Last Exam
    >{\centering\arraybackslash}p{1.2cm}        % XBench
}
\toprule
\multirow{2}{*}{\textbf{Method}}
  & \multicolumn{4}{c}{\textbf{General AI Assistant}}
  & \multicolumn{4}{c}{\textbf{WebWalkerQA}}
  & \multicolumn{4}{c}{\textbf{Humanity's Last Exam}}
  & \textbf{XBench} \\
\cmidrule(lr){2-5} \cmidrule(lr){6-9} \cmidrule(lr){10-13} \cmidrule(lr){14-14}
  & Lv.1 & Lv.2 & Lv.3 & Avg.
  & Easy & Med. & Hard & Avg.
  & NS & CE & SF & Avg.
  & Avg. \\
\midrule

% 下面为正文数据部分，已按你的要求调整
\multicolumn{14}{l}{\textit{\textbf{Direct Reasoning (32B)}}} \\
Qwen3-32B-thinking & 26.2 & 12.1 & 0.0 & 15.5 & 6.9 & 1.1 & 2.9 & 3.1 & 14.6 & 9.8 & 8.4 & 12.6 & 14.0 \\
DeepSeek-R1-32B    & 21.5 & 13.6 & 0.0 & 14.2 & 7.5 & 1.4 & 4.2 & 3.8 & 6.6 & 5.1 & 6.5 & 6.4 & 10.0 \\
QwQ-32B            & 30.9 & 6.5  & 5.2 & 15.5 & 7.5 &2.1  &4.6  & 4.3 & 11.5 & 7.3 & 5.2 & 9.6 & 10.7 \\
\midrule
\multicolumn{14}{l}{\textit{\textbf{Single-Enhanced Method (Qwen3-8B)}}} \\
Search-o1    & 35.9 & 15.4 & 0.0  & 21.4 &  6.7 & 15.5 & 9.7 & 11.5 & 7.6 & 2.7 & 5.3  & 6.4 & 10.0 \\
WebThinker   & 43.6 & 11.5 & 0.0  & 22.3 & 6.7 & 13.1 & 16.9 & 13.0 & 7.3 & 4.0 & 6.3 & 6.6 & 13.0 \\
\multicolumn{14}{l}{\textit{\textbf{RL-based Method (Qwen3-8B)}}} \\
Qwen3-8B & 28.1 & 15.4 & \first{16.7} & 20.4 & 0.0 & 2.4 & 2.8 & 2.0 & 3.9 & 2.7 & 8.4  & 4.6 & 9.0 \\
\;\; + GRPO & 48.7 & 25.0 & \second{8.3} & 32.0 & 24.4 & \second{33.3} & 26.8 & 29.0 & 7.9 & 4.0 & 10.5  & 7.8 & 20.0\\
\;\; + ARPO & 53.9 & 32.7 & \first{16.7} & 38.8 & 26.7 & \second{33.3} & \second{29.6} & 30.5 & 7.3 & \second{6.7} & \first{15.8} & 8.8 & 25.0 \\
\;\; + OPSD & 28.1 & 17.3 & \second{8.3}  & 20.4 &  6.7 & 13.1 & 9.7 & 10.5 & 4.2 & 4.0 & 6.3 & 4.6 & 16.0 \\
\;\; + SDPO & 43.6 & 25.0 & \second{8.3} & 30.1 & 24.4 & 29.8 & 26.8 & 27.5 & 7.9 & \second{6.7} & 10.5 & 8.2 & 16.0\\
\;\; + RLSD & 51.3 & 30.8 & \first{16.7} & 36.9 & 31.1 & 29.8 & \second{29.6} & 30.0 & \second{10.9} & 5.3 & \second{13.7} & 10.6 & 17.0\\
\;\; + \textbf{CRPO (Ours)} & \second{61.5} & \second{38.5} & \first{16.7} & \second{44.7} & \second{35.6} & \first{39.3} & \first{36.6} & \second{37.5} & \second{10.9} & \second{6.7} & \second{13.7} & \second{10.8} & \second{26.0}\\
\;\; + \textbf{CRPO* (Ours)} & \first{64.1} & \first{42.3} & \first{16.7} & \first{47.6} & \first{37.8} & \first{39.3} & \first{36.6} & \first{38.0} & \first{13.0} & \first{8.0} & \first{15.8} & \first{12.8} & \first{27.0}\\

\midrule
\multicolumn{14}{l}{\textit{\textbf{Single-Enhanced Method (Qwen3-14B)}}} \\
Search-o1    & 48.7 & 23.1 & 0.0  & 30.1 &  11.1 & 21.4 & 16.9 & 17.5 & 6.4 & 4.0 & 10.5 & 6.8 & 21.0 \\
WebThinker   & 48.7 & 26.9 & \second{8.3}  & 33.0 &  13.3 & 23.8 & 18.3 & 19.5 & 7.0 & 4.0 & 9.5 & 7.0 & 23.0 \\

\multicolumn{14}{l}{\textit{\textbf{RL-based Method (Qwen3-14B)}}} \\
Qwen3-14B & 33.3 & 13.5 & 0.0 & 19.4 & 6.7 & 2.4 & 4.2 & 4.0 & 5.5 & 6.7 & 11.6 & 6.8 & 14.0 \\
\;\; + GRPO & 51.3 & 34.6 & 0.0 & 36.9 & 28.9 & 33.3 & 26.8 & 30.0 & 7.9 & 6.7 & 12.6 & 8.6 & 27.0 \\
\;\; + ARPO & 56.4 & \second{40.4} & \second{16.7} & 43.7 & 31.1 & 42.9 & 31.0 & 36.0 & 10.3 & 10.7 & 13.7 & 11.0 & 32.0 \\
\;\; + OPSD & 48.7 & 25.0 & 8.3  & 32.0 &  11.1 & 21.4 & 16.9 & 17.5 & 6.4  & 5.3 & 11.6 & 7.2 & 20.0 \\
\;\; + SDPO & 51.3 & 26.9 & \second{16.7} &  35.0 & 28.9 & 33.3 & 31.0 & 31.5  & 10.3 & 6.7 & 12.6 & 10.2 & 27.0 \\
\;\; + RLSD & 56.4 & 34.6 & \second{16.7} &  40.8 & 26.7 & 33.3 & 35.2 & 32.5 & 10.9 & 8.0 & 13.7 & 11.0 & 28.0 \\
\;\; + \textbf{CRPO (Ours)} & \second{64.1} & \first{44.2} & \second{16.7} & \second{48.5} & \first{42.2} & \second{51.2} & \second{39.4} & \second{45.0} & \second{13.6} & \second{13.3} & \second{15.8} & \second{14.0} & \second{34.0}\\
\;\; + \textbf{CRPO* (Ours)} & \first{66.7} & \first{44.2} & \first{25.0} & \first{50.5} & \second{40.0} & \first{52.4} & \first{40.9} & \first{45.5} & \first{13.9} & \first{14.7} & \first{16.8} & \first{14.6} & \first{36.0}\\
\bottomrule
\end{tabular}
\caption{Overall performance on various deep search tasks, with accuracy results for each dataset obtained using llm-as-judge. The best results are indicated in \textbf{bold}, and the second-best results are \underline{underlined}. }
\label{tab:main_result}
\end{table*}

\subsection{Main Results}
Tables~\ref{tab:main_table}--\ref{tab:main_result} report results on 13 reasoning and deep-search benchmarks. CRPO$^{*}$ ranks first on every backbone and every benchmark average. We highlight three observations.
\begin{itemize}[leftmargin=1em]
    \item[\ding{192}] \textbf{Vanilla self-distillation is fragile, while contrastive gating restores its effectiveness.} OPSD remains near the base-model level on both tables (e.g., $26.1\%$ and $34.3\%$ on Qwen2.5-3B and Qwen2.5-7B in Table~\ref{tab:main_table}, and $20.4\%$ GAIA-Avg.\ on Qwen3-8B in Table~\ref{tab:main_result}). Even reward-filtered variants such as SDPO and RLSD remain far below CRPO$^{*}$: on the four deep-search benchmarks, CRPO$^{*}$ exceeds the best self-distillation baseline by $7.7\%$ on average at the 8B scale (e.g., $+10.7\%$ on GAIA-Avg.\ and $+10.0\%$ on XBench) and by $8.6\%$ on average at the 14B scale (e.g., $+13.0\%$ on WebWalkerQA-Avg.). Comparable advantages of $6.9\%$ to $8.7\%$ are observed on the knowledge-intensive split of Table~\ref{tab:main_table}. These results confirm that trajectory-level selection without a token-level signal is insufficient at high-uncertainty tool-call boundaries.
    \item[\ding{193}] \textbf{Position-aware contrast scales with horizon length.} Without any outcome reward, CRPO already surpasses ARPO across all backbones, and the gap widens on long-horizon tasks. Specifically, CRPO$^{*}$ improves over ARPO by $3.2\%$ on average for Llama3.1-8B in Table~\ref{tab:main_table}, and by $8.8\%$ and $7.5\%$ on GAIA and WebWalkerQA for Qwen3-8B, and by $6.8\%$ and $9.5\%$ for Qwen3-14B in Table~\ref{tab:main_result}. The largest gains are observed precisely where reward-only signals become sparsest.
    \item[\ding{194}] \textbf{Orthogonality to outcome-level RL and competitiveness with larger models.} Adding the contrastive regularizer on top of GRPO yields a consistent improvement of $1.1\%$ to $1.6\%$ on the average score of every backbone in Table~\ref{tab:main_table}, and further gains of up to $2.9\%$ on individual deep-search benchmarks in Table~\ref{tab:main_result}. Moreover, the resulting $8$B and $14$B CRPO$^{*}$ models exceed $32$B direct-reasoning baselines (Qwen3-32B-thinking, QwQ-32B, DeepSeek-R1-32B) by more than $30$ absolute points on GAIA-Avg., demonstrating that CRPO$^{*}$ delivers gains in agentic reasoning that cannot be matched by scaling parameters alone.
\end{itemize}

\subsection{Analyzing Sampling at Scale}
\label{sec:passk}
\begin{figure*}[t]
    \centering
    \includegraphics[width=\linewidth]{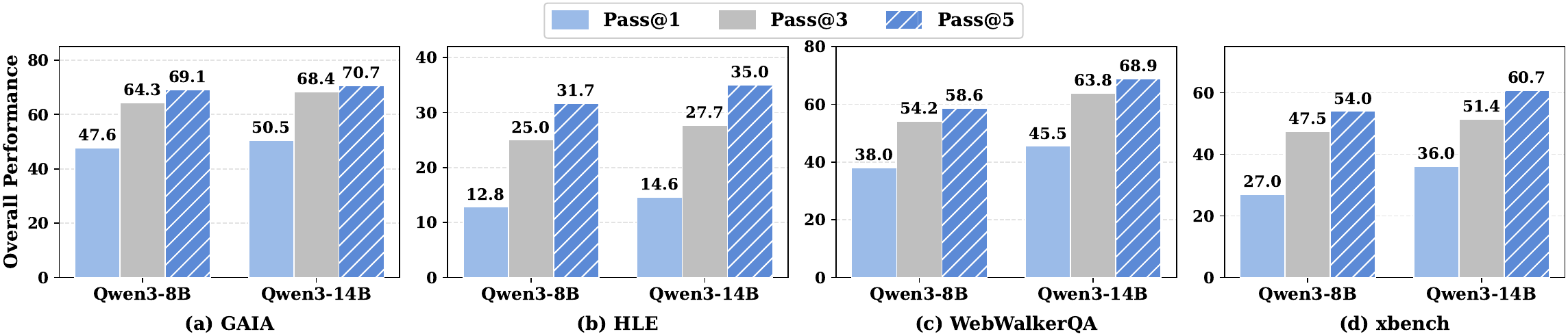}
    \caption{Pass@1, Pass@3, and Pass@5 of CRPO$^{*}$ on Qwen3-8B and Qwen3-14B across four deep-search benchmarks (GAIA, HLE, WebWalkerQA, xbench).}
    \label{fig:passk}
\end{figure*}
Due to the dynamic and multi-round nature of deep-search evaluation, we further evaluate CRPO$^{*}$ with Pass@3 and Pass@5 on Qwen3-8B and Qwen3-14B, as illustrated in Figure~\ref{fig:passk}. Both backbones exhibit a consistent and substantial scaling trend from Pass@1 to Pass@5 across all four benchmarks. Notably, Qwen3-14B with CRPO$^{*}$ reaches \textbf{$70.7\%$ on GAIA, $35.0\%$ on HLE, $68.9\%$ on WebWalkerQA, and $60.7\%$ on xbench} at Pass@5, exceeding its Pass@1 counterparts by $20.2\%$, $20.4\%$, $23.4\%$, and $24.7\%$ respectively. We attribute this stable improvement to the position-aware contrastive self-distillation objective in CRPO$^{*}$, which encourages diverse and informative tool-use behaviors and thereby expands the effective sampling space, jointly achieving sampling diversity and inference effectiveness.

\subsection{Training Dynamics}
\label{sec:training_dynamics}
Figure~\ref{fig:training_dynamics} reports four metrics during the post-training. We observe that CRPO and CRPO$^{*}$ keep both entropy and KL within a narrow band that closely tracks ARPO and stays well below GRPO, due to their selective distillation at key positions, while their accuracy curves consistently dominate all baselines training; in contrast, OPSD exhibits a clear entropy collapse between steps~$60$ and~$100$, where the student model suffers from exposure bias and cannot achieve high-quality reasoning without privileged information, and the KL values also reflect that the student model deviates significantly from the original model during training, this also explains the sub-optimal final score in Table~\ref{tab:main_result}. Meanwhile, the total tool calls of CRPO and CRPO$^{*}$ stabilize only slightly above ARPO and far below GRPO and OPSD, indicating that the position-aware contrastive objective preserves tool-use efficiency without explicit budget constraints.
\begin{figure}[!htbp]
    \centering
    \includegraphics[width=\linewidth]{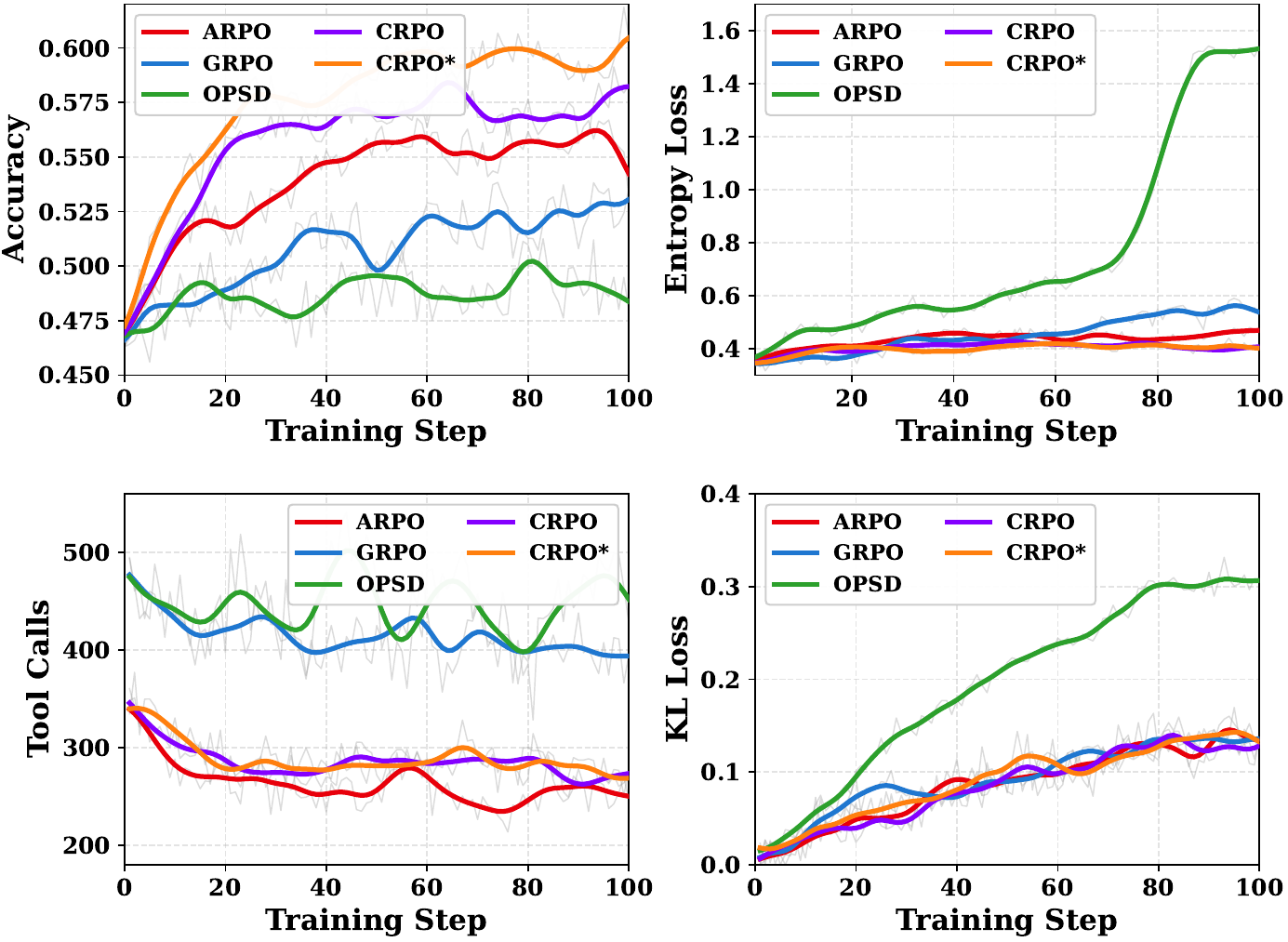}
    \caption{Training dynamics of CRPO and CRPO$^{*}$ against ARPO, GRPO, and OPSD on Qwen3-8B across $100$ optimization steps. From left to right and top to bottom: Accuracy, Entropy Loss, Tool Calls, and KL Loss (token-average).}
    \label{fig:training_dynamics}
    \vspace{-2mm}
\end{figure}

\subsection{Hyper-parameter Analysis}
\label{sec:hparams}
\begin{figure}[!htbp]
    \centering
    \includegraphics[width=\linewidth]{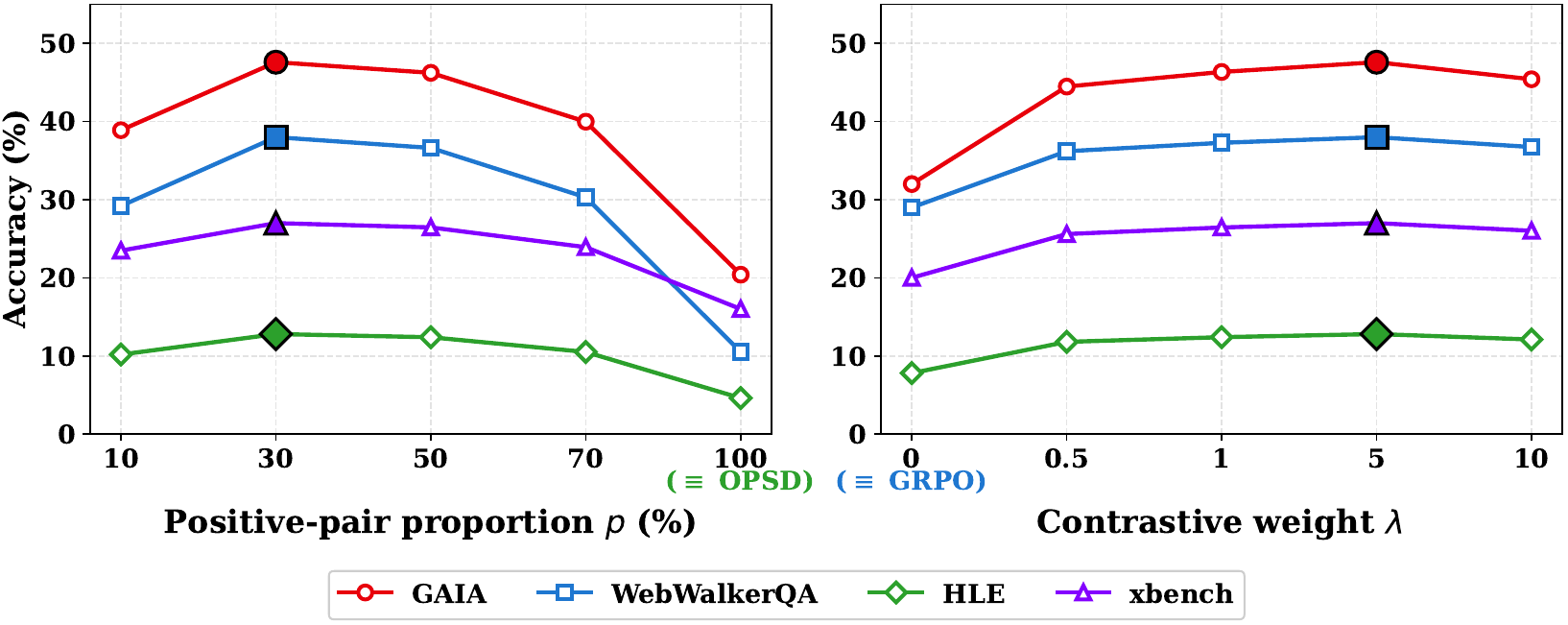}
    \caption{Hyper-parameter sweeps of CRPO$^{*}$ on Qwen3-8B across the four deep-search benchmarks. \textbf{Left:} positive-pair proportion $p$; $p\!=\!100$ degenerates to OPSD. \textbf{Right:} contrastive weight $\lambda$; $\lambda\!=\!0$ degenerates to GRPO.}
    \label{fig:hparams}
\end{figure}
We further study the sensitivity of CRPO$^{*}$ to its two key hyper-parameters: the positive-pair proportion $p$ that controls how many positions are selected as positive samples in the contrastive objective, and the regularization weight $\lambda$ in $\mathcal{L}_{\text{CRPO}^{*}} = \mathcal{L}_{\text{GRPO}} + \lambda\,\mathcal{L}_{\text{CRPO}}$. Figure~\ref{fig:hparams} reports the deep-search accuracy on Qwen3-8B as each hyper-parameter is varied. Both curves consistently peak at our default setting ($p\!=\!30$, $\lambda\!=\!5$) across all four benchmarks. When $p\to 100$, all positions are uniformly treated as positive pairs and CRPO$^{*}$ collapses to OPSD, suffering a substantial drop on every benchmark. Conversely, when $\lambda\to 0$, the contrastive regularizer vanishes and CRPO$^{*}$ degenerates to GRPO, again incurring a degradation. These two boundary cases jointly confirm that both the position-aware positive--negative gating and the contrastive coupling with the self-teacher are necessary; setting either of them to its degenerate value collapses CRPO$^{*}$ to a strictly weaker baseline. Additional ablations on the necessity of negative-pair repulsion and sensitivity to global rollout size are provided in Appendix~\ref{sec:ablation_negative} and~\ref{sec:rollout_sensitivity}.

\section{Conclusion}
In this paper, we introduce Contrastive Reinforced Policy Optimization (CRPO) to overcome unreliable position-level supervision in agentic On-Policy Self-Distillation. 
By reformulating OPSD as a contrastive objective, CRPO uses predictive entropy to selectively gate distillation positions, driving reflective exploration while mitigating exposure bias. 
Operating as a contrastively reweighted token-level policy gradient, CRPO seamlessly integrates with outcome-level RL methods like GRPO. 
Extensive evaluations across 13 benchmarks confirm that CRPO achieves state-of-the-art performance and robustly scales on long-horizon agentic tasks.

\bibliography{aaai2027}

\clearpage
\appendix
\makeatletter
\def\addcontentsline#1#2#3{%
  \addtocontents{#1}{\protect\contentsline{#2}{#3}{\thepage}{}}}
\makeatother
\renewcommand{\contentsname}{Table of Contents}
\setcounter{tocdepth}{2}
\tableofcontents

\section{Datasets and Baselines}\label{sec:datasets_and_baselines}

Below we detail the benchmarks and competing methods used throughout our experiments.

\subsection{Datasets}\label{sec:appendix_datasets}

Our evaluation covers three benchmark families that probe long-horizon reasoning at increasing levels of environmental interaction. We adopt the Tool-Star~\citep{dong2025toolstar} test split for the mathematical and knowledge-intensive tasks, and the WebThinker~\citep{Webthinker} split for the deep-search tasks.

\subsubsection{Mathematical Reasoning Benchmarks}
\begin{itemize}[leftmargin=1em]
\item \textbf{AIME24} contains 30 competition-grade problems from the 2024 American Invitational Mathematics Examination, spanning algebra, geometry, number theory, and combinatorics. Its difficulty makes it a discriminative testbed for frontier reasoning models.

\item \textbf{AIME25} comprises 30 problems from both AIME~I and AIME~II of February 2025, with comparable topical coverage to AIME24 but newer problem instances that reduce memorization effects.

\item \textbf{MATH500}~\citep{math500} is a curated 500-problem subset of the MATH corpus, covering college-level algebra, geometry, calculus, and number theory, and serving as a standard reasoning evaluation set.

\item \textbf{MATH}~\citep{MATH} is the full-scale counterpart, offering thousands of problems across multiple difficulty tiers and mathematical subfields, enabling fine-grained analysis of reasoning progression.

\item \textbf{GSM8K} targets elementary arithmetic word problems requiring 2--8 reasoning steps, primarily testing multi-step numerical reasoning rather than advanced mathematical knowledge.
\end{itemize}

\subsubsection{Knowledge-Intensive Reasoning Benchmarks}
\begin{itemize}[leftmargin=1em]
\item \textbf{HotpotQA}~\citep{hotpotqa} requires models to retrieve and compose evidence from multiple Wikipedia paragraphs, testing multi-hop reasoning over open-domain factual knowledge.

\item \textbf{2WikiMultihopQA}~\citep{2wiki} further stresses cross-document reasoning by constructing questions that explicitly require bridging information from two distinct Wikipedia articles.

\item \textbf{MuSiQue}~\citep{musique} raises the compositional depth beyond two hops, demanding that models chain evidence across several documents while resisting single-hop shortcut answers.

\item \textbf{Bamboogle}~\citep{bamboogle} consists of manually crafted multi-hop questions specifically designed so that no single retrieval query suffices, probing genuine compositional reasoning under retrieval-augmented settings.

\item \textbf{WebWalker}~\citep{2501_WebWalker} presents 680 QA pairs that require navigating live web-page structures, testing a model's ability to follow hyperlinks and reason over dynamically rendered content.
\end{itemize}

\subsubsection{Deep Search Benchmarks}
\begin{itemize}[leftmargin=1em]
\item \textbf{GAIA}~\citep{GAIA} offers 466 questions that jointly demand reasoning, web browsing, and tool invocation, functioning as an end-to-end evaluation of general-purpose AI assistants across three difficulty levels.

\item \textbf{Humanity's Last Exam (HLE)}~\citep{HLE} targets the frontier of model capability with interdisciplinary problems requiring abstract thinking and deep domain expertise, intended as an upper-bound probe for reasoning intelligence.

\item \textbf{WebWalkerQA}~\citep{2501_WebWalker} builds upon WebWalker with longer interaction horizons, requiring multi-step page traversal and cross-page information synthesis.

\item \textbf{xbench-DeepSearch}~\citep{chen2025xbench} evaluates breadth-and-depth search jointly, probing whether agentic models can explore a wide search space while maintaining coherent multi-step reasoning. We adopt the WebSailor~\citep{li2025websailor} evaluation protocol.
\end{itemize}

\subsection{Baselines}\label{sec:appendix_baselines}

We organize baselines into four groups---\textit{direct reasoning}, \textit{trajectory-level RL}, \textit{on-policy self-distillation}, and \textit{LLM-based search agents}---all evaluated on the same backbone checkpoints as CRPO.

\subsubsection{Direct Reasoning}
\begin{itemize}[leftmargin=1em]
\item \textbf{Qwen2.5 Series}~\citep{qwen2.5} provides general-purpose, code-oriented, and math-specialized variants pretrained on large multilingual corpora. We use the instruct versions at 3B and 7B scales as backbones for the mathematical and knowledge-intensive tracks.

\item \textbf{Llama3.1 Series}~\citep{llama3} offers dense models at 8B, 70B, and 405B parameters with extended context support. We adopt the 8B-Instruct variant as a cross-family backbone to test generalization.

\item \textbf{Qwen3 Series}~\citep{qwen3} spans 0.6B--235B parameters across dense and MoE configurations, with native dual-mode inference (thinking for complex reasoning, non-thinking for simple queries). We use Qwen3-8B and Qwen3-14B for the deep-search track.

\item \textbf{QwQ}~\citep{team2025qwq} is a 32B dense reasoning model trained with staged RL, targeting mathematical and logical tasks. We include it as a strong 32B direct-reasoning reference.

\item \textbf{DeepSeek-R1}~\citep{dpsk-r1} produces extended chains of thought through RL-trained reflection and self-verification. Its 32B distilled variant serves as another 32B-scale reference.

\item \textbf{TIR Prompting}~\citep{search-o1} augments a frozen base model with structured tool-invocation prompts at inference time, representing a parameter-free tool-use baseline.
\end{itemize}

\subsubsection{Trajectory-level RL Algorithms}
\begin{itemize}[leftmargin=1em]
\item \textbf{GRPO}~\citep{dpsk-r1} computes group-relative advantages across sampled rollouts and optimizes a clipped surrogate objective with a KL anchor to a frozen reference. It provides the outcome-level RL skeleton that CRPO$^{*}$ builds upon.

\item \textbf{ARPO}~\citep{ARPO} adapts GRPO to multi-turn agentic settings by introducing entropy-adaptive rollout allocation---sampling more branches at high-uncertainty steps following tool calls---and a step-level advantage attribution, representing the state-of-the-art trajectory-level method for agentic RL.
\end{itemize}

\subsubsection{On-Policy Self-Distillation}
\begin{itemize}[leftmargin=1em]
\item \textbf{OPSD}~\citep{OPSD} conditions the same policy on privileged rollout feedback to form a self-teacher, then uniformly minimizes the student--teacher KL across all positions. CRPO directly generalizes this formulation.

\item \textbf{SDPO}~\citep{SDPO} augments OPSD by filtering out unsuccessful rollouts before distillation, reducing noise from low-reward trajectories while still applying uniform per-position updates.

\item \textbf{RLSD}~\citep{RLSD} hybridizes GRPO advantages with the teacher--student log-probability gap to produce token-level reweighting, combining trajectory rewards with self-distillation signals. It is the strongest prior self-distillation method.
\end{itemize}

\subsubsection{LLM-based Search Agents}
\begin{itemize}[leftmargin=1em]
\item \textbf{Search-o1}~\citep{search-o1} couples agentic retrieval-augmented generation with a document-level reasoning module, enabling the model to iteratively close knowledge gaps during multi-step inference.

\item \textbf{WebThinker}~\citep{Webthinker} trains large reasoning models to autonomously browse, extract, and synthesize web content through DPO on iteratively generated tool-use trajectories, representing the current open-source deep-research baseline.
\end{itemize}

\section{Implementation Details}\label{sec:appendix_implementation}

In this section, we report the implementation details of CRPO and CRPO$^{*}$. To ensure a fair comparison with the strongest agentic RL baseline, our overall training pipeline and the shared hyper-parameters largely follow ARPO~\citep{ARPO}; we additionally describe the CRPO-specific hyper-parameters introduced by the contrastive self-distillation objective.

\subsection{Supervised Fine-Tuning}

Following the cold-start SFT then RL paradigm~\citep{r1searcher,dong2025toolstar}, we first fine-tune the backbone with supervised data using the LLaMA-Factory~\citep{zheng2024llamafactory} framework. We use a learning rate of $7 \times 10^{-6}$, with DeepSpeed ZeRO-3 and FlashAttention-2 for memory optimization. The batch size is set to 128, weight decay to 0.1, and the model is trained for 3 epochs with BF16 mixed precision and a maximum input length of 4096 tokens. We use Tool-Star's open-source 54K samples augmented with the 0.8K STILL set drawn from CORT~\citep{cort} for richer mathematical reasoning supervision.

\subsection{Reinforcement Learning}

We implement CRPO and CRPO$^{*}$ on top of the VERL framework. Tool invocation results (search snippets, Python execution outputs) are masked from the loss to prevent the policy from being biased toward tool outputs; the loss is computed only over tokens generated by the model itself, including textual reasoning and tool requests. We differentiate the settings for the two task families.

\paragraph{Deep Reasoning Tasks.}
For 7B-scale backbones, on both CRPO/CRPO$^{*}$ and the trajectory-level RL baselines, we adopt a total training batch size of 128, a PPO mini-batch size of 16, a global rollout size of 16, and an initial sampling size of 8. The maximum response length per interaction is capped at 4096 tokens. To stabilize training, the KL coefficient against the reference model is set to 0 (the contrastive regularizer in CRPO$^{*}$ already provides a position-aware anchor). The RL phase runs for 2 epochs.

\paragraph{Deep Search Tasks.}
For 8B-scale backbones, we keep the same configuration as in deep reasoning except that the maximum response length is extended to 8192 tokens to accommodate long-horizon web traversal. For 14B-scale backbones, we use the same hyper-parameters but distribute training across 16 NVIDIA H800 GPUs. Since the deep-search RL set contains only 1K mixed hard samples drawn from SimpleDeepSearcher~\citep{SimpleDeepSearcher} and WebSailor~\citep{li2025websailor}, the RL phase runs for 5 epochs.

\paragraph{CRPO-Specific Hyper-Parameters.}
On top of the shared training configuration, CRPO introduces two algorithm-specific hyper-parameters: the positive-pair proportion $p$ used by the entropy-gap-based judger and the contrastive weight $\lambda$ in $\mathcal{L}_{\text{CRPO}^{*}} = \mathcal{L}_{\text{GRPO}} + \lambda\,\mathcal{L}_{\text{CRPO}}$. Following the hyper-parameter analysis in Section~\ref{sec:hparams}, we set $p = 30\%$ and $\lambda = 5$ as the default values used in all main experiments. The contrastive temperature $\tau$ is fixed to $1.0$, and the top-$K$ vocabulary truncation in Section~\ref{sec:topk_approximation} is set to $K = 100$ in all runs. For CRPO$^{*}$, the EMA self-teacher coefficient is set to $\alpha = 0.1$.

\subsection{Search and Browsing Setup}

During both training and evaluation we use the Bing Web Search API as the retriever, configured with the US-English (US-EN) locale, and retrieve 10 web pages per query. For mathematical and knowledge-intensive reasoning, only the top-10 snippets are used as evidence. For deep-search tasks, each retrieved page is fetched up to 6000 tokens, and a browser agent of the same size as the reasoning backbone is used to refine and summarize the retrieved content.

\subsection{Hardware.}
All RL experiments are conducted on 16 NVIDIA H800 GPUs. We use BF16 mixed precision throughout. Cold-start SFT runs share the same hardware. The total wall-clock cost of CRPO$^{*}$ is comparable to that of ARPO under the same global rollout budget, since the contrastive regularizer reuses the same forward passes already required for the entropy-gap judger.

\section{Theoretical Analysis}\label{sec:theoretical_analysis}

\subsection{Proof of Proposition~\ref{prop:crpo_gradient}}\label{sec:crpo_gradient_proof}

We restate the proposition for convenience.

\begin{proposition}[Gradient of $\mathcal{L}_{\text{CRPO}}$]
The gradient of the CRPO loss is:
\begin{align}
    &\nabla_\theta \mathcal{L}_{\text{CRPO}}(\theta) = \mathbb{E}_{y_i \sim \pi_\theta(\cdot | x)} \bigg[ -\frac{1}{\tau} \sum_{i,t} c_{i,t} \notag \\
    &\cdot \mathbb{E}_{\hat{y}_t \sim \pi_\theta(\cdot | S_{i,t})} \Big[ \hat{A}_{i,t} \nabla_\theta \log \pi_\theta(\hat{y}_t | S_{i,t}) \Big] \bigg],
\end{align}
where $\hat{A}_{i,t} := \log \frac{\text{sg}(\pi_\theta(\hat{y}_t | T_{i,t}))}{\pi_\theta(\hat{y}_t | S_{i,t})}$ is the per-token advantage evaluated at a single sample $\hat{y}_t \sim \pi_\theta(\cdot | S_{i,t})$, and the contrastive weights are:
\begin{equation}
    c_{i,t} = \begin{cases}
        w_{i,t}^{\text{all}} - w_{i,t}^{\mathcal{P}}, & (i,t) \in \mathcal{P}, \\
        w_{i,t}^{\text{all}}, & (i,t) \in \mathcal{N},
    \end{cases}
\end{equation}
with:
\begin{align}
    w_{i,t}^{\text{all}} &= \frac{\exp(\text{sim}(i,t) / \tau)}{\sum_{(j,k) \in \mathcal{P} \cup \mathcal{N}} \exp(\text{sim}(j,k) / \tau)}, \\
    w_{i,t}^{\mathcal{P}} &= \frac{\exp(\text{sim}(i,t) / \tau)}{\sum_{(j,k) \in \mathcal{P}} \exp(\text{sim}(j,k) / \tau)}.
\end{align}
Moreover, $c_{i,t} \leq 0$ for $(i,t) \in \mathcal{P}$ and $c_{i,t} \geq 0$ for $(i,t) \in \mathcal{N}$.
\end{proposition}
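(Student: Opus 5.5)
The plan is a chain-rule computation that treats the two log-sum-exp terms separately. It then reduces each position's similarity gradient to a score-function form.

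\textbf{Step 1: split the loss.} Write
\[
\mathcal{L}_{\text{CRPO}} = -\log\sum_{\mathcal{P}} e^{\text{sim}/\tau} + \log\sum_{\mathcal{P}\cup\mathcal{N}} e^{\text{sim}/\tau}.
\]
The gradient of a log-sum-exp is the softmax-weighted sum of the gradients of its arguments. This gives
\[
\nabla_\theta\mathcal{L}_{\text{CRPO}} = \frac{1}{\tau}\sum_{(i,t)}\big(w^{\text{all}}_{i,t} - \mathbf{1}[(i,t)\in\mathcal{P}]\,w^{\mathcal{P}}_{i,t}\big)\nabla_\theta\text{sim}(i,t) = \frac{1}{\tau}\sum_{i,t} c_{i,t}\,\nabla_\theta\text{sim}(i,t),
\]
which is exactly the stated $c_{i,t}$.

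In this step I treat the sets $\mathcal{P}$ and $\mathcal{N}$, and the sampled rollouts, as fixed. The ranking by $\Delta\mathrm{E}$ is a non-differentiable selection and is held constant. The outer $\mathbb{E}_{y_i\sim\pi_\theta}$ is read the usual RL way: positions are taken from on-policy rollouts, and no gradient passes through the sampling of $y_i$ or through the weights. I will state this convention explicitly, since it is what the proposition implicitly assumes.

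\textbf{Step 2: gradient of one similarity term.} Since $\text{sim}(i,t) = -\sum_{\hat y}\pi_\theta(\hat y|S_{i,t})\,\hat A(\hat y)$, with the teacher term under $\text{sg}$, differentiation gives
\[
\nabla_\theta\text{sim} = -\sum_{\hat y}\nabla\pi_\theta\,\hat A + \sum_{\hat y}\pi_\theta\,\nabla\log\pi_\theta.
\]
The last sum equals $\nabla\sum_{\hat y}\pi_\theta = 0$. Applying the log-derivative trick $\nabla\pi = \pi\,\nabla\log\pi$ to the first sum yields
\[
\nabla_\theta\text{sim}(i,t) = -\mathbb{E}_{\hat y_t\sim\pi_\theta(\cdot|S_{i,t})}\big[\hat A_{i,t}\,\nabla_\theta\log\pi_\theta(\hat y_t|S_{i,t})\big].
\]
Substituting this into Step 1 gives the displayed formula with the prefactor $-1/\tau$.

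\textbf{Step 3: signs.} For $(i,t)\in\mathcal{N}$, $c_{i,t} = w^{\text{all}}_{i,t} \ge 0$ trivially. For $(i,t)\in\mathcal{P}$, the two weights share the numerator $e^{\text{sim}(i,t)/\tau}$. The denominator of $w^{\text{all}}$ sums over the superset $\mathcal{P}\cup\mathcal{N}$ of strictly positive terms, so it is at least the $\mathcal{P}$-denominator. Hence $w^{\text{all}}_{i,t}\le w^{\mathcal{P}}_{i,t}$, and therefore $c_{i,t}\le 0$.

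The main obstacle is not the algebra but justifying the conventions. The key one is the vanishing of the $\sum\pi\nabla\log\pi$ term, which depends on summing over the full vocabulary; with top-$K$ truncation it only holds approximately. The treatment of $\mathcal{P}$, $\mathcal{N}$ and the sampling distribution as constants also has to be stated carefully. I would state both explicitly as assumptions of the derivation.
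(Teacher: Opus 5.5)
Your Steps~1 and~3 are correct and coincide with the paper's: the log-sum-exp split into the softmax weights $w^{\text{all}}_{i,t}$, $w^{\mathcal{P}}_{i,t}$, and the superset-denominator argument for the signs. Stating the conventions on fixed $\mathcal{P},\mathcal{N}$ and the full-vocabulary sum is more careful than the paper. The genuine error is the first line of Step~2. Because $\hat A_{i,t}=\log\frac{\text{sg}(\pi_\theta(\hat y_t\mid T_{i,t}))}{\pi_\theta(\hat y_t\mid S_{i,t})}$, one has $\mathrm{KL}(\pi_\theta(\cdot\mid S_{i,t})\,\|\,\text{sg}(\pi_\theta(\cdot\mid T_{i,t})))=\mathbb{E}_{\hat y_t\sim\pi_\theta(\cdot\mid S_{i,t})}[-\hat A_{i,t}]$. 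Hence $\text{sim}(i,t)=+\sum_{\hat y_t}\pi_\theta(\hat y_t\mid S_{i,t})\hat A_{i,t}$; the expression you differentiated is the KL, not the similarity. Redoing your computation with the correct sign (the $\sum\pi_\theta\nabla_\theta\log\pi_\theta$ term still vanishes) gives $\nabla_\theta\text{sim}(i,t)=+\mathbb{E}_{\hat y_t}[\hat A_{i,t}\nabla_\theta\log\pi_\theta(\hat y_t\mid S_{i,t})]$. You can confirm this on two tokens with $\pi_\theta(\cdot\mid S_{i,t})=(\sigma(\theta),1-\sigma(\theta))$. Therefore
\[
\nabla_\theta\mathcal{L}_{\text{CRPO}}=+\frac{1}{\tau}\sum_{i,t}c_{i,t}\,\mathbb{E}_{\hat y_t\sim\pi_\theta(\cdot\mid S_{i,t})}\big[\hat A_{i,t}\nabla_\theta\log\pi_\theta(\hat y_t\mid S_{i,t})\big].
\]
You arrived at the displayed $-1/\tau$ formula only through this sign slip.

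So the right-hand side as written equals $-\nabla_\theta\mathcal{L}_{\text{CRPO}}$ (the descent direction), not the gradient, and the statement cannot be proved as stated. The two expressions agree only if $\sum_{i,t}c_{i,t}\nabla_\theta\text{sim}(i,t)=0$, which fails generically. For example, with one positive position $a$ and one negative position $b$, the gradient is $\frac{w^{\text{all}}_b}{\tau}(\nabla_\theta\text{sim}(b)-\nabla_\theta\text{sim}(a))$. The paper's own proof derives the $+1/\tau$ form exactly as above. It then rewrites it ``equivalently with a negation'', citing $c_{i,t}\le 0$ on $\mathcal{P}$, which is not an equivalence; so neither argument establishes the displayed identity. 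A sanity check would have caught this. By Steps~1 and~3 alone, a descent step raises $\text{sim}(i,t)$ for $(i,t)\in\mathcal{P}$, pulling the student toward the teacher. Under the $-1/\tau$ formula, however, the descent step (step size $\eta$) on a positive pair is $-\frac{\eta|c_{i,t}|}{\tau}\mathbb{E}_{\hat y_t}[\hat A_{i,t}\nabla_\theta\log\pi_\theta(\hat y_t\mid S_{i,t})]$. That step lowers the probability of exactly the tokens the teacher favours ($\hat A_{i,t}>0$). The correct conclusion of your argument is the $+1/\tau$ identity; the sign claims on $c_{i,t}$ are unaffected.
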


\begin{proof}
We proceed in two steps.

\paragraph{Step 1: Differentiating the contrastive loss.}

Recall:
\begin{align}
    \mathcal{L}_{\text{CRPO}} = -\log \frac{\sum_{(i,t) \in \mathcal{P}} \exp(\text{sim}(i,t) / \tau)}{\sum_{(i,t) \in \mathcal{P} \cup \mathcal{N}} \exp(\text{sim}(i,t) / \tau)}.
\end{align}
Rewriting as a difference of log-sum-exp:
\begin{align}
    \mathcal{L}_{\text{CRPO}} &= \underbrace{\log \!\sum_{(i,t) \in \mathcal{P} \cup \mathcal{N}} \!\exp\!\big(\tfrac{\text{sim}(i,t)}{\tau}\big)}_{:= L_1} \notag \\
    &\quad - \underbrace{\log \sum_{(i,t) \in \mathcal{P}} \exp\!\big(\tfrac{\text{sim}(i,t)}{\tau}\big)}_{:= L_2}.
\end{align}
By the chain rule on $\log\sum\exp$:
\begin{align}
    \nabla_\theta L_1 &= \frac{1}{\tau} \!\!\sum_{(i,t) \in \mathcal{P} \cup \mathcal{N}} \!\!w_{i,t}^{\text{all}} \cdot \nabla_\theta \text{sim}(i,t), \\
    \nabla_\theta L_2 &= \frac{1}{\tau} \sum_{(i,t) \in \mathcal{P}} w_{i,t}^{\mathcal{P}} \cdot \nabla_\theta \text{sim}(i,t).
\end{align}
Combining:
\begin{align}
    &\nabla_\theta \mathcal{L}_{\text{CRPO}} = \nabla_\theta L_1 - \nabla_\theta L_2 \notag \\
    &= \frac{1}{\tau} \sum_{i,t} c_{i,t} \cdot \nabla_\theta \text{sim}(i,t). \label{eq:crpo_step1}
\end{align}

\textbf{Sign analysis.} For $(i,t) \in \mathcal{P}$: the denominator of $w_{i,t}^{\text{all}}$ is no smaller than that of $w_{i,t}^{\mathcal{P}}$ (same numerator, superset in denominator), so $w_{i,t}^{\text{all}} \leq w_{i,t}^{\mathcal{P}}$ and $c_{i,t} \leq 0$. For $(i,t) \in \mathcal{N}$: $c_{i,t} = w_{i,t}^{\text{all}} \geq 0$.

\paragraph{Step 2: Per-position similarity gradient.}

We derive $\nabla_\theta \text{sim}(i,t)$ following the RL policy gradient derivation. Since $\text{sim}(i,t) = -\text{KL}(\pi_\theta(\cdot | S_{i,t}) \| \text{sg}(\pi_\theta(\cdot | T_{i,t})))$, we compute:
\begin{align*}
    &\nabla_\theta \text{KL}(\pi_\theta(\cdot | S_{i,t}) \| \text{sg}(\pi_\theta(\cdot | T_{i,t}))) \\
    &= \nabla_\theta \sum_{\hat{y}_t} \pi_\theta(\hat{y}_t | S_{i,t}) \log \frac{\pi_\theta(\hat{y}_t | S_{i,t})}{\text{sg}(\pi_\theta(\hat{y}_t | T_{i,t}))}.
\end{align*}
Let $\hat{A}_{i,t} := \log \frac{\text{sg}(\pi_\theta(\hat{y}_t | T_{i,t}))}{\pi_\theta(\hat{y}_t | S_{i,t})}$, where the hat indicates dependence on a single sample $\hat{y}_t$. Then:
\begin{align*}
    &= -\nabla_\theta \sum_{\hat{y}_t} \pi_\theta(\hat{y}_t | S_{i,t}) \cdot \hat{A}_{i,t} \\
    &= -\sum_{\hat{y}_t} \Big[ \pi_\theta(\hat{y}_t | S_{i,t}) \nabla_\theta \hat{A}_{i,t} + \hat{A}_{i,t} \nabla_\theta \pi_\theta(\hat{y}_t | S_{i,t}) \Big].
\end{align*}
We have $\nabla_\theta \hat{A}_{i,t} = -\nabla_\theta \log \pi_\theta(\hat{y}_t | S_{i,t})$ (the negative score function, since the teacher is under stop-gradient). For the first term, using the score trick $\pi_\theta(\hat{y}_t | S_{i,t}) \nabla_\theta \log \pi_\theta(\hat{y}_t | S_{i,t}) = \nabla_\theta \pi_\theta(\hat{y}_t | S_{i,t})$:
\begin{align*}
    &-\sum_{\hat{y}_t} \pi_\theta(\hat{y}_t | S_{i,t}) \nabla_\theta \hat{A}_{i,t} \\
    &= \sum_{\hat{y}_t} \nabla_\theta \pi_\theta(\hat{y}_t | S_{i,t}) \\
    &= \nabla_\theta \underbrace{\sum_{\hat{y}_t} \pi_\theta(\hat{y}_t | S_{i,t})}_{=1} = 0.
\end{align*}
Thus the gradient of the KL reduces to:
\begin{align*}
    &\nabla_\theta \text{KL}(\pi_\theta(\cdot | S_{i,t}) \| \text{sg}(\pi_\theta(\cdot | T_{i,t}))) \\
    &= -\sum_{\hat{y}_t} \hat{A}_{i,t} \nabla_\theta \pi_\theta(\hat{y}_t | S_{i,t}) \\
    &= -\sum_{\hat{y}_t} \pi_\theta(\hat{y}_t | S_{i,t}) \Big[ \hat{A}_{i,t} \nabla_\theta \log \pi_\theta(\hat{y}_t | S_{i,t}) \Big] \\
    &= -\mathbb{E}_{\hat{y}_t \sim \pi_\theta(\cdot | S_{i,t})} \Big[ \hat{A}_{i,t} \nabla_\theta \log \pi_\theta(\hat{y}_t | S_{i,t}) \Big].
\end{align*}
Therefore:
\begin{align}
    &\nabla_\theta \text{sim}(i,t) = \notag \\
    &\mathbb{E}_{\hat{y}_t \sim \pi_\theta(\cdot | S_{i,t})} \Big[ \hat{A}_{i,t} \nabla_\theta \log \pi_\theta(\hat{y}_t | S_{i,t}) \Big]. \label{eq:sim_grad}
\end{align}

\paragraph{Combining.} Substituting~\eqref{eq:sim_grad} into~\eqref{eq:crpo_step1}, and noting that the rollouts $y_i$ are sampled from $\pi_\theta(\cdot | x)$, we take the outer expectation:
\begin{align}
    &\nabla_\theta \mathcal{L}_{\text{CRPO}}(\theta) = \mathbb{E}_{y_i \sim \pi_\theta(\cdot | x)} \bigg[ \frac{1}{\tau} \sum_{i,t} c_{i,t} \notag \\
    &\cdot \mathbb{E}_{\hat{y}_t \sim \pi_\theta(\cdot | S_{i,t})} \Big[ \hat{A}_{i,t} \nabla_\theta \log \pi_\theta(\hat{y}_t | S_{i,t}) \Big] \bigg].
\end{align}
Noting that $c_{i,t} \leq 0$ for positive pairs, we equivalently write with a negation:
\begin{align}
    &\nabla_\theta \mathcal{L}_{\text{CRPO}}(\theta) = \mathbb{E}_{y_i \sim \pi_\theta(\cdot | x)} \bigg[ -\frac{1}{\tau} \sum_{i,t} c_{i,t} \notag \\
    &\cdot \mathbb{E}_{\hat{y}_t \sim \pi_\theta(\cdot | S_{i,t})} \Big[ \hat{A}_{i,t} \nabla_\theta \log \pi_\theta(\hat{y}_t | S_{i,t}) \Big] \bigg].
\end{align}
This completes the proof.
\end{proof}

\paragraph{Interpretation.} Proposition~\ref{prop:crpo_gradient} shows that $\nabla_\theta \mathcal{L}_{\text{CRPO}}$ takes the form of a contrastively reweighted per-token policy gradient. At positive pairs ($c_{i,t} \leq 0$), gradient descent on $\mathcal{L}_{\text{CRPO}}$ reduces the KL divergence and distills the teacher's reflective behavior into the student; at negative pairs ($c_{i,t} \geq 0$), it enlarges the KL divergence and pushes the student away from the teacher's exposure-biased predictions. The magnitudes $|c_{i,t}|$ are determined by softmax-normalized similarities, so that positions with higher similarity receive larger weights.

\subsection{Anatomy of CRPO$^{*}$ and Comparison with GRPO}\label{sec:crpo_star_vs_grpo}

This subsection analyzes the combined objective CRPO$^{*}$ introduced in the Methodology and compares it term-by-term with classical GRPO.

\paragraph{Classical GRPO.}
GRPO~\citep{dpsk-r1} optimizes a clipped policy ratio under a group-relative outcome reward, regularized by a KL penalty against a frozen reference $\pi_{\theta_{\text{ref}}}$:
\begin{align}
    &\mathcal{L}_{\text{GRPO}}(\theta) = - \mathbb{E}_{x,\{y_i\}} \bigg[ \tfrac{1}{\mathrm{G}} \sum_{i,t} \min\big( \rho_{i,t}(\theta) \hat{R}_{i,t}, \text{clip}(\rho_{i,t}(\theta), \notag \\ &1-\epsilon_{\text{clip}}, 1+\epsilon_{\text{clip}}) \hat{R}_{i,t} \big) \bigg] + \beta\, \mathbb{E}_{x,\{y_i\}}\!\Big[ \text{KL}\!\big( \pi_\theta \,\big\|\, \pi_{\theta_{\text{ref}}} \big) \Big],
\end{align}
with importance ratio $\rho_{i,t}(\theta) = \pi_\theta(y_{i,t}|S_{i,t}) / \pi_{\theta_{\text{old}}}(y_{i,t}|S_{i,t})$, clip threshold $\epsilon_{\text{clip}}$, group-relative reward $\hat{R}_{i,t} = R_i - \mathrm{mean}(\{R_j\}_{j=1}^{\mathrm{G}})$, and KL coefficient $\beta$. The credit signal is outcome-level (constant in $t$) and the anchor $\pi_{\theta_{\text{ref}}}$ is static.

\paragraph{CRPO$^{*}$ as a regularizer swap.}
Plugging $\mathcal{L}_{\text{CRPO}}$ into $\mathcal{L}_{\text{GRPO}}$:
\begin{align}
    &\mathcal{L}_{\text{CRPO}^{*}}(\theta) = \underbrace{- \mathbb{E}_{x,\{y_i\}} \!\Big[ \tfrac{1}{\mathrm{G}} \sum_{i,t} \min(\rho_{i,t} \hat{R}_{i,t}, \text{clip}(\rho_{i,t}) \hat{R}_{i,t}) \Big]}_{\text{outcome-level RL skeleton (unchanged from GRPO)}} \notag \\
    &\quad + \underbrace{\lambda \cdot \mathcal{L}_{\text{CRPO}}(\theta)}_{\text{replaces } \beta\,\text{KL}(\pi_\theta\|\pi_{\theta_{\text{ref}}})}.
\end{align}
The GRPO skeleton is preserved verbatim; only the regularizer is replaced. Table~\ref{tab:grpo_vs_crpostar} summarizes the resulting differences.

\begin{table}[!htbp]
\centering
\small
\renewcommand{\arraystretch}{1.15}
\begin{tabular}{@{}p{2.4cm}p{2.4cm}p{2.4cm}@{}}
\toprule
\textbf{Aspect} & \textbf{GRPO} & \textbf{CRPO$^{*}$} \\
\midrule
RL signal & clipped PG with group-relative $\hat{R}_{i,t}$ & same as GRPO \\
Credit granularity & outcome-level (scalar, constant in $t$) & logit-level (per-token) \\
KL anchor & frozen reference $\pi_{\theta_{\text{ref}}}$ & dynamic self-teacher $\pi_\theta(\cdot|T_{i,t})$ \\
Per-position weighting & uniform across $t$ & contrastive gate $c_{i,t}$ (sign + magnitude) \\
\bottomrule
\end{tabular}
\caption{Component-wise comparison of GRPO and CRPO$^{*}$.}
\label{tab:grpo_vs_crpostar}
\end{table}

The reference-model KL in GRPO functions only as a passive proximity constraint: it discourages $\pi_\theta$ from drifting away from $\pi_{\theta_{\text{ref}}}$ without indicating a preferred direction. In contrast, $\mathcal{L}_{\text{CRPO}}$ provides a directional signal through three mechanisms. First, it carries logit-level advantages $\hat{A}_{i,t}$ at every position. Second, its gate $c_{i,t}$ takes opposite signs on $\mathcal{P}$ and $\mathcal{N}$, simultaneously pulling the student toward reflective tokens and pushing it away from exposure-biased ones. Third, the softmax weights are normalized across the rollout group, which is consistent with the group-relative reward $\hat{R}_{i,t}$ used in GRPO. Consequently, CRPO$^{*}$ replaces the static, single-anchor regularizer of GRPO with a dynamic, position-aware teacher while preserving the GRPO skeleton in full.

\subsection{Trust-Region Regularized Self-Teacher}\label{sec:trust_region_teacher}

If the self-teacher $\pi_\theta(\cdot \mid T_{i,t})$ is allowed to evolve freely with $\theta$, the regularization target of $\mathcal{L}_{\text{CRPO}}$ changes from one optimization step to the next, which can destabilize training. To stabilize the procedure, we constrain the teacher distribution to lie within a trust region around its initial counterpart $\pi_{\theta_{\text{ref}}}(\cdot \mid T_{i,t})$~\citep{SDPO,schulman2017trustregionpolicyoptimization,peng2019advantageweightedregressionsimplescalable}:
\begin{equation}
    \sum_{i,t} \text{KL}\!\big( q(\cdot \mid T_{i,t}) \,\big\|\, \pi_{\theta_{\text{ref}}}(\cdot \mid T_{i,t}) \big) \leq \epsilon, \quad \epsilon > 0,
\end{equation}
and seek the feasible $q$ that is closest (in the sense of cross-entropy) to the current teacher $\pi_\theta(\cdot \mid T_{i,t})$. Following~\citet{peng2019advantageweightedregressionsimplescalable}:
\begin{align}\begin{split}
    \mathop{\mathrm{arg\,max}}_{q \in \Delta} \;\; & \sum_{i,t} \sum_{\hat{y}_t} q(\hat{y}_t \mid T_{i,t}) \log \frac{\pi_\theta(\hat{y}_t \mid T_{i,t})}{\pi_{\theta_{\text{ref}}}(\hat{y}_t \mid T_{i,t})} \\
    \text{s.t.} \;\; & \sum_{i,t} \text{KL}\!\big( q(\cdot \mid T_{i,t}) \,\big\|\, \pi_{\theta_{\text{ref}}}(\cdot \mid T_{i,t}) \big) \leq \epsilon,
\end{split}\label{eq:trust_region_problem}\end{align}
where $\Delta$ denotes the probability simplex over the vocabulary.

\begin{proposition}[Closed-form regularized self-teacher]\label{prop:trust_region_teacher}
The solution of~\eqref{eq:trust_region_problem} admits the closed form
\begin{align}
    &q^{*}(\hat{y}_t \mid T_{i,t}) \propto \exp\!\Big( (1-\alpha) \log \pi_{\theta_{\text{ref}}}(\hat{y}_t \mid T_{i,t}) \notag\\
    &\qquad\qquad + \alpha \log \pi_\theta(\hat{y}_t \mid T_{i,t}) \Big),
\end{align}
where $\alpha > 0$ is determined by the trust-region radius $\epsilon$ through the Lagrange multiplier of the KL constraint, with $\alpha$ decreasing as $\epsilon$ decreases.
\end{proposition}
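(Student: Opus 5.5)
The plan is a Lagrangian argument for the convex problem~\eqref{eq:trust_region_problem}. The objective is linear in $q$, and the constraint set (the simplex intersected with a KL ball around $\pi_{\theta_{\text{ref}}}$) is convex, so strong duality holds once a Slater point exists. The point $q=\pi_{\theta_{\text{ref}}}$ itself is strictly feasible because $\epsilon>0$. I will introduce a multiplier $\beta\ge 0$ for the KL constraint and multipliers $\mu_{i,t}$ for the normalization $\sum_{\hat y_t} q(\hat y_t\mid T_{i,t})=1$. Nonnegativity of $q$ will turn out to be automatic, since the log term acts as a barrier.

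The Lagrangian separates across positions $(i,t)$ because both the objective and the constraint are sums over positions. For each fixed $(i,t)$ and token $\hat y_t$, I set the derivative with respect to $q(\hat y_t\mid T_{i,t})$ to zero:
\begin{equation*}
\log\frac{\pi_\theta(\hat y_t\mid T_{i,t})}{\pi_{\theta_{\text{ref}}}(\hat y_t\mid T_{i,t})} - \beta\Big(\log\frac{q(\hat y_t\mid T_{i,t})}{\pi_{\theta_{\text{ref}}}(\hat y_t\mid T_{i,t})}+1\Big) - \mu_{i,t}=0 .
\end{equation*}
Solving gives
\begin{equation*}
q^{*}\propto \pi_{\theta_{\text{ref}}}\big(\pi_\theta/\pi_{\theta_{\text{ref}}}\big)^{1/\beta}.
\end{equation*}
Setting $\alpha:=1/\beta$, this is exactly $\exp\big((1-\alpha)\log\pi_{\theta_{\text{ref}}}+\alpha\log\pi_\theta\big)$ up to normalization, which is the claimed form. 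The normalizer is absorbed into $\mu_{i,t}$. Strict positivity of $q^*$ justifies ignoring the inequality constraints $q\ge0$.

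The remaining task is the relation between $\alpha$ and $\epsilon$. I will argue that the constraint is active whenever $\pi_\theta\neq\pi_{\theta_{\text{ref}}}$. If it were inactive, a linear objective over the simplex with no binding constraint would push $q$ toward a vertex, and a vertex is far outside the KL ball. So $\beta>0$, hence $\alpha$ is finite and positive, and by complementary slackness $\alpha$ is pinned down by
\begin{equation*}
\sum_{i,t}\mathrm{KL}(q^*_\alpha\|\pi_{\theta_{\text{ref}}})=\epsilon.
\end{equation*}

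For monotonicity, I use that $q^*_\alpha$ is an exponential family in the natural parameter $\alpha$ with sufficient statistic $s=\log(\pi_\theta/\pi_{\theta_{\text{ref}}})$ and base measure $\pi_{\theta_{\text{ref}}}$. Write $A(\alpha)$ for the log-partition function. Then $\mathrm{KL}(q^*_\alpha\|\pi_{\theta_{\text{ref}}})=\alpha A'(\alpha)-A(\alpha)$, whose derivative is $\alpha A''(\alpha)=\alpha\,\mathrm{Var}_{q^*_\alpha}(s)\ge 0$. So the KL is nondecreasing in $\alpha$, and strictly increasing unless $s$ is constant. Inverting this relation shows that a smaller $\epsilon$ gives a smaller $\alpha$.

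The main obstacle is this last step: it needs the activity of the constraint and the variance identity to make the $\epsilon$–$\alpha$ correspondence rigorous. I will handle the degenerate case $\pi_\theta=\pi_{\theta_{\text{ref}}}$ separately. In that case every feasible $q$ is optimal, and the formula holds for any $\alpha$.
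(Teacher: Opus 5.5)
Your Lagrangian stationarity computation is the same as the paper's, which also differentiates the Lagrangian and sets $\alpha=1/\mu$. Your additions are all correct: Slater's condition at $q=\pi_{\theta_{\text{ref}}}$, sufficiency of KKT, positivity of $q^*$, and the identity $\frac{d}{d\alpha}\mathrm{KL}(q^*_\alpha\|\pi_{\theta_{\text{ref}}})=\alpha\,\mathrm{Var}_{q^*_\alpha}(s)$.

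The step that fails is your claim that the KL constraint is active whenever $\pi_\theta\neq\pi_{\theta_{\text{ref}}}$, on the grounds that ``a vertex is far outside the KL ball.'' Since $\pi_{\theta_{\text{ref}}}(\cdot\mid T_{i,t})$ is a softmax with full support on a finite vocabulary, every point of the simplex has finite KL to it. A vertex $\delta_{\hat y}$ has $\mathrm{KL}(\delta_{\hat y}\|\pi_{\theta_{\text{ref}}})=-\log\pi_{\theta_{\text{ref}}}(\hat y\mid T_{i,t})$. Write $s_{i,t}=\log(\pi_\theta/\pi_{\theta_{\text{ref}}})$ and $M_{i,t}=\arg\max_{\hat y}s_{i,t}(\hat y)$. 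The maximizers of the linear objective over the whole simplex are exactly the $q$ supported on the sets $M_{i,t}$. The cheapest of them, $\pi_{\theta_{\text{ref}}}$ conditioned on $M_{i,t}$, has total KL $\epsilon_{\max}:=\sum_{i,t}-\log\pi_{\theta_{\text{ref}}}(M_{i,t}\mid T_{i,t})<\infty$. For $\epsilon\geq\epsilon_{\max}$ that point is feasible, so the optimal value is $\sum_{i,t}\max_{\hat y}s_{i,t}(\hat y)$. No full-support $q$ attains this when some $s_{i,t}$ is non-constant, and every $q^*_\alpha$ with finite $\alpha$ has full support. So in this regime the statement as written is false, not merely unproved.

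Your own variance identity locates the boundary. The sum $\sum_{i,t}\mathrm{KL}(q^*_\alpha\|\pi_{\theta_{\text{ref}}})$ increases strictly in $\alpha$ but converges to $\epsilon_{\max}$ as $\alpha\to\infty$. Hence the complementary-slackness equation is solvable, and then uniquely, if and only if $\epsilon<\epsilon_{\max}$.

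The repair is to restrict to $0<\epsilon<\epsilon_{\max}$ and replace the vertex heuristic with the correct argument:
\begin{itemize}
\item If $\beta=0$, then $q^*$ maximizes the Lagrangian, i.e.\ the linear objective, over the whole simplex.
\item Hence $q^*$ is supported on the $M_{i,t}$ and has total KL at least $\epsilon_{\max}>\epsilon$, a contradiction.
\item So $\beta>0$, and the rest of your argument goes through.
\end{itemize}
For $\epsilon\geq\epsilon_{\max}$, the closed form survives only as the $\alpha\to\infty$ limit.

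The paper's proof has the same hole: it divides by $\mu$ without checking $\mu>0$. Its closing remark that $q^*\to\pi_\theta$ as $\epsilon\to\infty$ is also inaccurate. In fact $q^*_\alpha=\pi_\theta$ at $\alpha=1$, whereas $\alpha\to\infty$ concentrates $q^*$ on $M_{i,t}$.
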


\begin{proof}
For notational brevity we omit the conditioning context $T_{i,t}$ and write $\pi_\theta(\hat{y}_t)$, $\pi_{\theta_{\text{ref}}}(\hat{y}_t)$, and $q(\hat{y}_t)$ in the following derivation; the same argument is performed in parallel at every position $(i,t)$. Introducing a multiplier $\mu \geq 0$ for the trust-region constraint and a multiplier $\nu$ for the simplex normalization constraint $\sum_{\hat{y}_t} q(\hat{y}_t) = 1$, the Lagrangian of~\eqref{eq:trust_region_problem} is:
\begin{align*}
    &\mathcal{J}(q, \mu, \nu) = \sum_{\hat{y}_t} q(\hat{y}_t) \log \frac{\pi_\theta(\hat{y}_t)}{\pi_{\theta_{\text{ref}}}(\hat{y}_t)} \\
    &\quad - \mu \!\left( \sum_{\hat{y}_t} q(\hat{y}_t) \log \frac{q(\hat{y}_t)}{\pi_{\theta_{\text{ref}}}(\hat{y}_t)} - \epsilon \right) \\
    &\quad + \nu \!\left( \sum_{\hat{y}_t} q(\hat{y}_t) - 1 \right).
\end{align*}
The stationarity condition $\partial \mathcal{J} / \partial q(\hat{y}_t) = 0$ gives, for every $\hat{y}_t$:
\begin{align*}
    0 = \log \frac{\pi_\theta(\hat{y}_t)}{\pi_{\theta_{\text{ref}}}(\hat{y}_t)} - \mu \!\left( \log \frac{q(\hat{y}_t)}{\pi_{\theta_{\text{ref}}}(\hat{y}_t)} + 1 \right) + \nu.
\end{align*}
Solving for $\log q(\hat{y}_t)$:
\begin{align*}
    \log q(\hat{y}_t) = \log \pi_{\theta_{\text{ref}}}(\hat{y}_t) + \tfrac{1}{\mu} \log \frac{\pi_\theta(\hat{y}_t)}{\pi_{\theta_{\text{ref}}}(\hat{y}_t)} + \tfrac{\nu}{\mu} - 1.
\end{align*}
Letting $\alpha := 1/\mu$ and absorbing the $\hat{y}_t$-independent term $\tfrac{\nu}{\mu} - 1$ into the normalization (since $q$ must lie on the simplex), we obtain:
\begin{align*}
    q^{*}(\hat{y}_t) &\propto \pi_{\theta_{\text{ref}}}(\hat{y}_t) \exp\!\Big( \alpha \log \frac{\pi_\theta(\hat{y}_t)}{\pi_{\theta_{\text{ref}}}(\hat{y}_t)} \Big) \\
    &\propto \exp\!\Big( (1-\alpha) \log \pi_{\theta_{\text{ref}}}(\hat{y}_t) + \alpha \log \pi_\theta(\hat{y}_t) \Big).
\end{align*}
Restoring the conditioning context $T_{i,t}$ yields the claimed form. The multiplier $\mu$ is determined by the trust-region budget $\epsilon$ through complementary slackness, and $\alpha = 1/\mu > 0$ is monotonically increasing in $\epsilon$: a smaller $\epsilon$ tightens the trust region and decreases $\alpha$, so that $q^{*}$ stays closer to $\pi_{\theta_{\text{ref}}}$; in the limit $\epsilon \to \infty$, the constraint becomes slack and $q^{*} \to \pi_\theta$.
\end{proof}

\paragraph{Interpretation in CRPO.}
The optimal trust-region teacher $q^{*}$ is a geometric interpolation between $\pi_{\theta_{\text{ref}}}(\cdot \mid T_{i,t})$ and $\pi_\theta(\cdot \mid T_{i,t})$ in log-probability space, with mixing coefficient $\alpha$ controlled by the trust-region radius. Substituting $q^{*}$ for $\pi_\theta(\cdot \mid T_{i,t})$ in the similarity function yields a stabilized variant of $\mathcal{L}_{\text{CRPO}}$:
\begin{align}
    \widetilde{\text{sim}}(i,t) = -\text{KL}\!\big( \pi_\theta(\cdot \mid S_{i,t}) \,\big\|\, \text{sg}(q^{*}(\cdot \mid T_{i,t})) \big).
\end{align}
The limit $\alpha \to 0$ recovers a frozen reference teacher, while $\alpha \to \infty$ recovers the original CRPO objective.

\paragraph{EMA self-teacher as a parameter-space surrogate.}
The construction of $q^{*}$ enforces the trust region in output-distribution space and requires an additional forward pass through $\pi_{\theta_{\text{ref}}}$. An alternative implementation enforces the trust region in parameter space by maintaining a separate teacher parameter $\theta'$ updated as an exponential moving average (EMA) of the student:
\begin{equation}
    \theta' \leftarrow (1 - \alpha)\,\theta' + \alpha\, \theta, \quad \alpha \in (0, 1],
\end{equation}
initialized as $\theta' = \theta_{\text{ref}}$. The self-teacher in $\mathcal{L}_{\text{CRPO}}$ is then instantiated as $\pi_{\theta'}(\cdot \mid T_{i,t})$, and the corresponding similarity becomes
\begin{align}
    \widetilde{\text{sim}}^{\text{EMA}}(i,t) = -\text{KL}\!\big( \pi_\theta(\cdot \mid S_{i,t}) \,\big\|\, \text{sg}(\pi_{\theta'}(\cdot \mid T_{i,t})) \big).
\end{align}
The two variants share the same role of $\alpha$ and the same limiting behavior, but differ in their cost profiles~\citep{SDPO}. The closed-form $q^{*}$ requires additional log-probability evaluations under $\pi_{\theta_{\text{ref}}}$ but no extra parameter storage, whereas the EMA teacher requires an additional copy of $\theta$ in memory but introduces no extra forward pass. We adopt the EMA teacher in our main experiments because of its lower wall-clock cost in the agentic training setup.

\subsection{Top-$K$ Approximation of KL and Entropy}\label{sec:topk_approximation}

The similarity $\text{sim}(i,t)$ and the entropies $\mathrm{E}^{\text{S}}_{i,t}$ and $\mathrm{E}^{\text{T}}_{i,t}$ are sums over the full vocabulary, which is computationally infeasible to materialize for modern tokenizers with $|\mathcal{V}| \sim 10^5$. Following~\citet{SDPO}, we approximate both quantities by retaining only the top-$K$ tokens predicted by the student and folding the remaining tail into a single aggregated symbol. We now formalize this approximation and bound the resulting bias.

\paragraph{Setup.}
Fix $(i,t)$. Let $\mathcal{V}$ be the vocabulary and
\begin{align}
    \mathcal{V}_K(i,t) := \text{top-}K\big( \pi_\theta(\cdot \mid S_{i,t}) \big) \subseteq \mathcal{V}.
\end{align}
For a distribution $p$ on $\mathcal{V}$, write $p_{\text{tail}} := \sum_{\hat{y}_t \notin \mathcal{V}_K} p(\hat{y}_t)$. Within this subsection we abbreviate the student and teacher distributions as $p_{\text{S}} := \pi_\theta(\cdot \mid S_{i,t})$ and $p_{\text{T}} := \pi_\theta(\cdot \mid T_{i,t})$.

\paragraph{Top-$K$ KL estimator.}
Treating the tail as a single aggregated symbol:
\begin{align}
    \widehat{\text{KL}}_K(p \,\|\, r) := \!\!\sum_{\hat{y}_t \in \mathcal{V}_K(i,t)} \!\! p(\hat{y}_t) \log \tfrac{p(\hat{y}_t)}{r(\hat{y}_t)} + p_{\text{tail}} \log \tfrac{p_{\text{tail}}}{r_{\text{tail}}}.
\end{align}
This avoids materializing the full teacher logits: only $K$ teacher log-probs and the tail mass $r_{\text{tail}}$ are evaluated for $r = p_{\text{T}}$.

\begin{proposition}[Top-$K$ KL is a lower bound on the true KL]\label{prop:topk_kl_bound}
For any $K$ and any two distributions $p, r$ on $\mathcal{V}$ with full support,
\begin{align}
    \widehat{\text{KL}}_K(p \,\|\, r) \leq \text{KL}(p \,\|\, r),
\end{align}
with equality if and only if the tail conditional distributions of $p$ and $r$ coincide.
\end{proposition}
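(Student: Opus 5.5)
The plan is to write the true KL as the top-$K$ estimator plus a nonnegative remainder, using the chain rule for KL, equivalently the log-sum inequality on the tail. First I split the full sum
\[
\text{KL}(p\,\|\,r) = \sum_{\hat{y}_t\in\mathcal{V}_K} p(\hat{y}_t)\log\tfrac{p(\hat{y}_t)}{r(\hat{y}_t)} + \sum_{\hat{y}_t\notin\mathcal{V}_K} p(\hat{y}_t)\log\tfrac{p(\hat{y}_t)}{r(\hat{y}_t)}.
\]
The head sum coincides exactly with the first term of $\widehat{\text{KL}}_K$. It therefore suffices to compare the tail sum with $p_{\text{tail}}\log(p_{\text{tail}}/r_{\text{tail}})$.

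For the tail, I use full support, which gives $p_{\text{tail}}, r_{\text{tail}}>0$ whenever the tail is nonempty. The degenerate case $\mathcal{V}_K=\mathcal{V}$ gives equality trivially, with the convention $0\log 0=0$. I then introduce the tail conditionals $\bar p(\hat{y}_t)=p(\hat{y}_t)/p_{\text{tail}}$ and $\bar r(\hat{y}_t)=r(\hat{y}_t)/r_{\text{tail}}$ on $\mathcal{V}\setminus\mathcal{V}_K$. Substituting $p=p_{\text{tail}}\bar p$ and $r=r_{\text{tail}}\bar r$ into each log ratio gives
\[
\sum_{\hat{y}_t\notin\mathcal{V}_K} p(\hat{y}_t)\log\tfrac{p(\hat{y}_t)}{r(\hat{y}_t)} = p_{\text{tail}}\log\tfrac{p_{\text{tail}}}{r_{\text{tail}}} + p_{\text{tail}}\,\text{KL}(\bar p\,\|\,\bar r).
\]
Combining this with the split yields the exact decomposition
\[
\text{KL}(p\,\|\,r)=\widehat{\text{KL}}_K(p\,\|\,r)+p_{\text{tail}}\,\text{KL}(\bar p\,\|\,\bar r).
\]

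Gibbs' inequality gives $\text{KL}(\bar p\,\|\,\bar r)\ge 0$, and since $p_{\text{tail}}\ge 0$ the remainder term is nonnegative, which proves the inequality. For the equality characterization, full support gives $p_{\text{tail}}>0$ whenever the tail is nonempty. Equality therefore holds iff $\text{KL}(\bar p\,\|\,\bar r)=0$, which by the equality case of Gibbs holds iff $\bar p=\bar r$, i.e., the tail conditional distributions coincide.

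The only delicate point is bookkeeping rather than any deep obstacle. One must check that the decomposition is unaffected by $\mathcal{V}_K$ being selected from $p_{\text{S}}$: the argument holds for any fixed subset, so the choice of $\mathcal{V}_K$ does not matter. One must also handle the empty-tail case separately, as above.
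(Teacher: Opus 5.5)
Your proposal is correct and takes essentially the same route as the paper: you split the sum over $\mathcal{V}_K$ and its complement, factor the tail through its conditional distributions to get the exact identity $\text{KL}(p\,\|\,r)=\widehat{\text{KL}}_K(p\,\|\,r)+p_{\text{tail}}\,\text{KL}(\tilde p\,\|\,\tilde r)$, and conclude with Gibbs' inequality. You also use full support to get $p_{\text{tail}}>0$, which the ``only if'' direction of the equality case needs, and you handle the empty-tail case; the paper leaves both of these small points implicit.
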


\begin{proof}
Decompose the full KL by splitting the sum over $\mathcal{V}_K(i,t)$ and its complement:
\begin{align*}
    &\text{KL}(p \,\|\, r) = \!\!\sum_{\hat{y}_t \in \mathcal{V}_K(i,t)} \!\!p(\hat{y}_t) \log \tfrac{p(\hat{y}_t)}{r(\hat{y}_t)} + \!\!\sum_{\hat{y}_t \notin \mathcal{V}_K(i,t)} \!\!p(\hat{y}_t) \log \tfrac{p(\hat{y}_t)}{r(\hat{y}_t)}.
\end{align*}
Write each tail term as $p(\hat{y}_t) = p_{\text{tail}} \cdot \tilde{p}(\hat{y}_t)$ and $r(\hat{y}_t) = r_{\text{tail}} \cdot \tilde{r}(\hat{y}_t)$, where $\tilde{p}, \tilde{r}$ are the renormalized conditional distributions on the tail. Then:
\begin{align*}
    &\!\!\!\!\!\!\sum_{\hat{y}_t \notin \mathcal{V}_K(i,t)} \!\!\!\!\!p(\hat{y}_t) \log \tfrac{p(\hat{y}_t)}{r(\hat{y}_t)} = p_{\text{tail}} \log \tfrac{p_{\text{tail}}}{r_{\text{tail}}} + p_{\text{tail}} \, \text{KL}(\tilde{p} \,\|\, \tilde{r}),
\end{align*}
where the cross terms have been collected into the conditional KL. Substituting back:
\begin{align*}
    \text{KL}(p \,\|\, r) = \widehat{\text{KL}}_K(p \,\|\, r) + p_{\text{tail}} \cdot \text{KL}(\tilde{p} \,\|\, \tilde{r}) \geq \widehat{\text{KL}}_K(p \,\|\, r),
\end{align*}
since $p_{\text{tail}} \geq 0$ and $\text{KL}(\tilde{p} \,\|\, \tilde{r}) \geq 0$ by Gibbs' inequality, with equality if and only if $\tilde{p} = \tilde{r}$.
\end{proof}

\paragraph{Discussion.}
The residual $\text{KL}(p \,\|\, r) - \widehat{\text{KL}}_K(p \,\|\, r) = p_{\text{tail}} \cdot \text{KL}(\tilde p \,\|\, \tilde r)$ factorizes into two small terms. Empirically $p_{\text{tail}} \leq 10^{-2}$ at $K = 100$ for modern LLMs, and $\text{KL}(\tilde p \,\|\, \tilde r)$ is bounded since both tails spread over low-probability tokens. The surrogate is therefore a tight strict lower bound and never overestimates the regularization signal.

\paragraph{Top-$K$ entropy estimator.}
Applying the same folding to the entropy: for any $p$ on $\mathcal{V}$,
\begin{align}
    \widehat{\mathrm{H}}_K(p) := -\!\!\sum_{\hat{y}_t \in \mathcal{V}_K(i,t)} \!\!p(\hat{y}_t) \log p(\hat{y}_t) - p_{\text{tail}} \log p_{\text{tail}}.
\end{align}
This treats the tail as a single aggregated symbol of mass $p_{\text{tail}}$. Instantiating $p = \pi_\theta(\cdot \mid S_{i,t})$ gives $\widehat{\mathrm{E}}^{\text{S},K}_{i,t}$, and $p = \pi_\theta(\cdot \mid T_{i,t})$ gives $\widehat{\mathrm{E}}^{\text{T},K}_{i,t}$, with $\widehat{\Delta \mathrm{E}}_{i,t} := \widehat{\mathrm{E}}^{\text{S},K}_{i,t} - \widehat{\mathrm{E}}^{\text{T},K}_{i,t}$.

\begin{proposition}[Top-$K$ entropy is a lower bound on the true entropy]\label{prop:topk_entropy_bound}
For any $K$ and any distribution $p$ on $\mathcal{V}$,
\begin{align}
    \widehat{\mathrm{H}}_K(p) \leq \mathrm{H}(p),
\end{align}
with equality if and only if the tail conditional $\tilde{p}$ degenerates to a point mass.
\end{proposition}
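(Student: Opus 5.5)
The plan mirrors the proof of Proposition~\ref{prop:topk_kl_bound}: split the true entropy into a head part and a tail part, and show that the tail part exceeds the folded term $-p_{\text{tail}}\log p_{\text{tail}}$ by a nonnegative remainder.

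\textbf{Decomposition.} First write
\begin{align*}
\mathrm{H}(p) = -\sum_{\hat{y}_t \in \mathcal{V}_K(i,t)} p(\hat{y}_t)\log p(\hat{y}_t) \;-\; \sum_{\hat{y}_t \notin \mathcal{V}_K(i,t)} p(\hat{y}_t)\log p(\hat{y}_t).
\end{align*}
The head sum coincides exactly with the head part of $\widehat{\mathrm{H}}_K(p)$, so everything reduces to the tail sum.

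\textbf{Tail term.} If $p_{\text{tail}}=0$, both tail expressions vanish under the convention $0\log 0=0$. Equality then holds trivially, and this case is read as degenerate.

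If $p_{\text{tail}}>0$, set $\tilde p(\hat{y}_t) := p(\hat{y}_t)/p_{\text{tail}}$ on the complement of $\mathcal{V}_K(i,t)$. Substituting $p(\hat{y}_t) = p_{\text{tail}}\,\tilde p(\hat{y}_t)$ and using $\sum_{\text{tail}} \tilde p = 1$ gives the grouping identity
\begin{align*}
-\sum_{\hat{y}_t \notin \mathcal{V}_K(i,t)} p(\hat{y}_t)\log p(\hat{y}_t) = -p_{\text{tail}}\log p_{\text{tail}} + p_{\text{tail}}\,\mathrm{H}(\tilde p).
\end{align*}
This yields
\begin{align*}
\mathrm{H}(p) = \widehat{\mathrm{H}}_K(p) + p_{\text{tail}}\,\mathrm{H}(\tilde p).
\end{align*}
This is the entropy analogue of the KL residual $p_{\text{tail}}\,\mathrm{KL}(\tilde p\,\|\,\tilde r)$ in the earlier proof.

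\textbf{Conclusion and equality case.} Shannon entropy of a discrete distribution is nonnegative, and $p_{\text{tail}}\ge 0$, so $\widehat{\mathrm{H}}_K(p)\le \mathrm{H}(p)$. Moreover, $\mathrm{H}(\tilde p)=0$ if and only if $\tilde p$ is a point mass. Hence, when $p_{\text{tail}}>0$, equality holds exactly when the tail conditional degenerates.

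The main obstacle is bookkeeping at this equality characterization, in two places:
\begin{itemize}
\item The case $p_{\text{tail}}=0$, where $\tilde p$ is undefined, has to be handled separately or by convention, as above.
\item The paper's $\mathrm{E}$ is defined with the opposite sign, as $\sum p\log p$, whereas the $\mathrm{H}$ used here is the standard nonnegative entropy. I would note that the inequality is stated for $\mathrm{H}$, and that the bound on $\mathrm{E}$ follows by negation.
\end{itemize}
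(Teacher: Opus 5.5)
Your proposal is correct and takes essentially the same route as the paper. Both split $\mathrm{H}(p)$ into head and tail, use the grouping identity to obtain $\mathrm{H}(p) = \widehat{\mathrm{H}}_K(p) + p_{\text{tail}}\,\mathrm{H}(\tilde p)$, and conclude from nonnegativity of entropy. Your separate treatment of $p_{\text{tail}}=0$, where $\tilde p$ is undefined and equality holds trivially, is a small tightening of the equality condition that the paper's proof leaves implicit.
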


\begin{proof}
Split the entropy along $\mathcal{V}_K(i,t)$ and its complement, and write tail terms as $p(\hat{y}_t) = p_{\text{tail}} \tilde{p}(\hat{y}_t)$:
\begin{align*}
    &\mathrm{H}(p) = -\!\!\sum_{\hat{y}_t \in \mathcal{V}_K(i,t)} \!\!p(\hat{y}_t) \log p(\hat{y}_t) - \!\!\sum_{\hat{y}_t \notin \mathcal{V}_K(i,t)} \!\!p(\hat{y}_t) \log p(\hat{y}_t) \\
    &= -\!\!\sum_{\hat{y}_t \in \mathcal{V}_K(i,t)} \!\!p(\hat{y}_t) \log p(\hat{y}_t) - p_{\text{tail}} \log p_{\text{tail}} + p_{\text{tail}} \mathrm{H}(\tilde{p}) \\
    &= \widehat{\mathrm{H}}_K(p) + p_{\text{tail}} \mathrm{H}(\tilde{p}) \geq \widehat{\mathrm{H}}_K(p),
\end{align*}
since $\mathrm{H}(\tilde{p}) \geq 0$, with equality if and only if $\tilde{p}$ is a point mass (zero entropy).
\end{proof}

\paragraph{Bias cancellation in $\widehat{\Delta \mathrm{E}}_{i,t}$.}
The per-distribution residual $p_{\text{tail}} \mathrm{H}(\tilde p)$ enters both $\mathrm{E}^{\text{S}}_{i,t}$ and $\mathrm{E}^{\text{T}}_{i,t}$ with the same sign, and the two residuals largely cancel in the difference $\widehat{\Delta \mathrm{E}}_{i,t}$ when the student and the teacher have comparable tail masses (which holds in practice, since $\mathcal{V}_K(i,t)$ is fixed by the student and the teacher shares the same backbone). Moreover, the partition $\mathcal{P}, \mathcal{N}$ depends only on the rank of $\Delta \mathrm{E}_{i,t}$ within the rollout group through the Bottom-$p\%$ operation. Hence $\widehat{\Delta \mathrm{E}}_{i,t}$ need only preserve this rank, rather than reproduce the absolute values, to recover the correct contrastive partition.

\section{More Results}\label{sec:more_results}

\subsection{Statistical Analysis of Entropy Collapse After Tool Calls}\label{sec:entropy_collapse_stats}

Figure~\ref{fig: intro1} in the Introduction illustrates a single trajectory where the self-teacher's entropy collapses after tool calls. To validate that this phenomenon is systematic rather than anecdotal, we conduct a statistical analysis across all rollouts in the GAIA training set (Qwen3-8B backbone, 1K questions $\times$ 8 rollouts).

\paragraph{Setup.} For each position $t$ in every rollout, we compute the entropy gap $\Delta \mathrm{E}_{i,t} = \mathrm{E}^{\text{S}}_{i,t} - \mathrm{E}^{\text{T}}_{i,t}$ and record its distance (in tokens) to the nearest preceding tool-call boundary. We define a position as exhibiting \emph{significant teacher entropy collapse} when $\Delta \mathrm{E}_{i,t} > 0.3$ (i.e., the teacher's entropy is more than 0.3 nat below the student's, corresponding to a perplexity ratio $\exp(0.3) \approx 1.35\times$). We then bin positions by their distance to the nearest tool-call boundary and report (a)~the proportion of positions exceeding the threshold in each bin, and (b)~the full distribution of entropy gaps grouped into four distance ranges.

\paragraph{Results.} Figure~\ref{fig:entropy_stats} presents the results. Notably, the entropy collapse does \emph{not} occur immediately after a tool call. Panel~(a) shows that in the first 10 tokens (the ``react'' phase), only 31\% of positions exceed the threshold---the teacher is still processing the newly injected information. The proportion then surges to \textbf{72\%--76\%} in the 10--30 token window (the ``copy'' phase), where the teacher begins reproducing reasoning routes from the privileged demonstrations, causing its entropy to collapse. Beyond 50 tokens, the proportion drops back to 14\% as the teacher's advantage dissipates. Panel~(b) visualizes this pattern via violin plots: the copy phase (10--30 tokens) exhibits a mean gap of ${\sim}0.50$ with heavy concentration above the threshold, while the react phase (0--10 tokens) is moderate and positions far from tool calls ($>$50 tokens) are centered near zero.

\begin{figure}[!htbp]
    \centering
    \includegraphics[width=\linewidth]{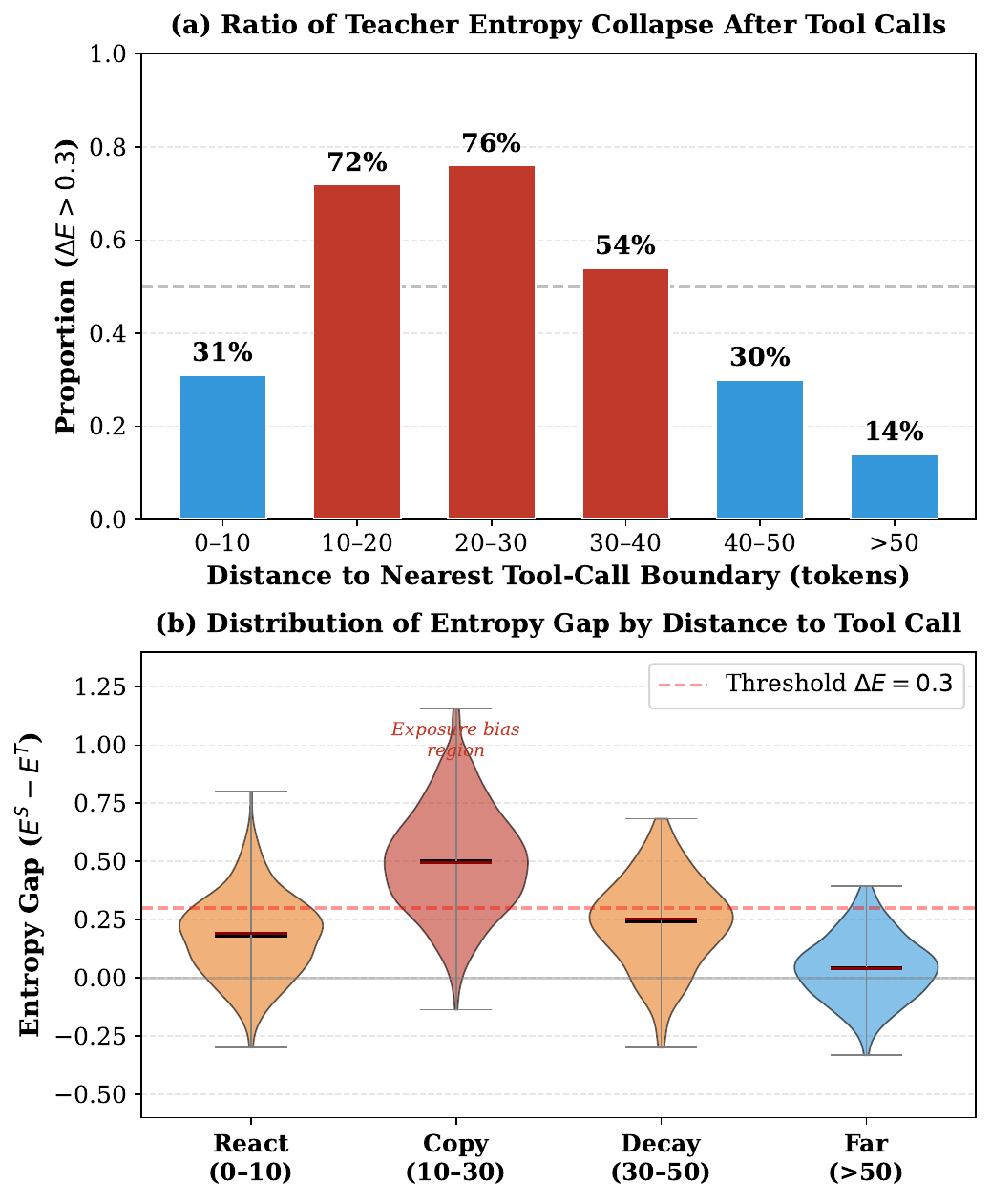}
    \caption{Statistical analysis of teacher entropy collapse after tool calls across all GAIA training rollouts (Qwen3-8B, 8K trajectories). (a)~Proportion of positions where $\Delta \mathrm{E} > 0.3$, binned by distance to nearest tool-call boundary. (b)~Violin plot of entropy gap distribution grouped by distance range.}
    \label{fig:entropy_stats}
\end{figure}

\paragraph{Analysis.} These statistics reveal that the exposure bias of the self-teacher is \emph{spatially concentrated} in a delayed window after tool-call boundaries: the teacher first processes the new context (react phase, moderate gap), then begins copying privileged reasoning routes (copy phase, large gap), before gradually returning to baseline. This delayed-peak pattern further justifies the entropy-gap-based judger in CRPO: it naturally captures the copy phase as negative pairs without requiring explicit distance heuristics. The contrastive objective actively pushes the student away from the teacher's overconfident route-convergence predictions at these positions, while preserving distillation at react-phase positions where the teacher engages in genuine reflective exploration.

\paragraph{Token-level evidence: what does the teacher say at collapse positions?}
To further verify that the entropy collapse stems from the teacher copying privileged reasoning routes rather than performing independent reasoning, we visualize the most frequent tokens at high-$\Delta \mathrm{E}$ and low-$\Delta \mathrm{E}$ positions in Figure~\ref{fig:wordcloud}. At positions where the teacher's entropy collapses (panel~a), the dominant tokens are search-related action words (\texttt{search}, \texttt{find}, \texttt{query}, \texttt{information}, \texttt{determine}, \texttt{relevant}, \texttt{Wikipedia}) and factual anchors (\texttt{found}, \texttt{identified}, \texttt{provided}, \texttt{conclude})---precisely the vocabulary that characterizes the demonstrated search trajectories in the privileged context. In contrast, at positions where the teacher maintains high entropy or even exceeds the student (panel~b), the dominant tokens are reasoning primitives (\texttt{think}, \texttt{let}, \texttt{solve}, \texttt{check}, \texttt{calculate}) and mathematical notation (\texttt{frac}, \texttt{boxed}, \texttt{sqrt}, \texttt{python}), reflecting genuine reflective exploration and computation. This lexical divergence confirms our hypothesis: the teacher's entropy collapse is driven by \emph{route convergence}---copying the search-and-conclude patterns from privileged demonstrations---rather than by legitimate uncertainty reduction through reasoning.

\begin{figure}[t]
    \centering
    \includegraphics[width=\linewidth]{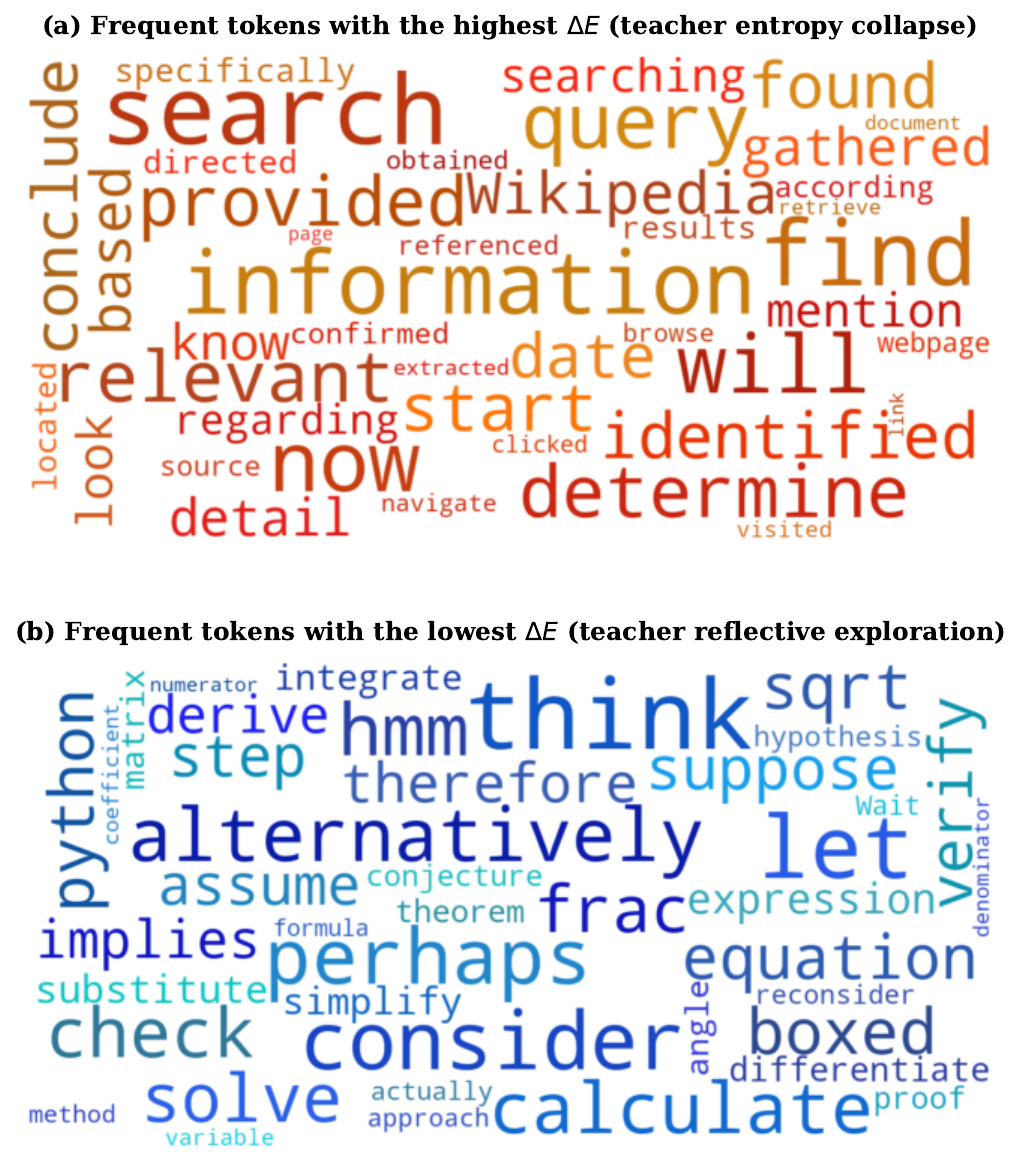}
    \caption{Word clouds of frequent tokens at (a)~high-$\Delta \mathrm{E}$ positions (teacher entropy collapse) and (b)~low-$\Delta \mathrm{E}$ positions (teacher reflective exploration). The collapse positions are dominated by search-related action tokens from the privileged demonstrations.}
    \label{fig:wordcloud}
\end{figure}

\subsection{Ablation: Role of Negative Pairs in Contrastive Objective}\label{sec:ablation_negative}

A natural question is whether the performance gain of CRPO stems from selectively distilling positive positions alone, or whether actively pushing the student \emph{away} from negative positions provides additional benefit. To answer this, we compare two variants on the four deep-search benchmarks using Qwen3-8B:

\begin{itemize}[leftmargin=1em]
    \item \textbf{Positive Only}: after the entropy-gap judger classifies positions into $\mathcal{P}$ and $\mathcal{N}$, we only minimize the KL divergence at positive positions $\mathcal{P}$ and completely ignore the negative set $\mathcal{N}$ (no gradient signal from them).
    \item \textbf{CRPO (Full)}: our full contrastive objective that simultaneously pulls the student toward the teacher at $\mathcal{P}$ and pushes it away at $\mathcal{N}$ via the InfoNCE loss.
\end{itemize}

\paragraph{Results.} Figure~\ref{fig:ablation_negative} reports the comparison. The full CRPO consistently outperforms the positive-only variant across all four benchmarks: $+3.9\%$ on GAIA, $+4.0\%$ on WebWalkerQA, $+1.4\%$ on HLE, and $+4.0\%$ on xbench. While positive-only distillation already improves over uniform OPSD by filtering out exposure-biased positions, it leaves the model vulnerable to drifting toward the teacher's overconfident predictions at negative positions through the implicit policy gradient. The negative-pair repulsion in CRPO provides an explicit corrective signal that actively discourages route convergence, yielding a meaningful and consistent improvement.

\begin{figure}[!htbp]
    \centering
    \includegraphics[width=\linewidth]{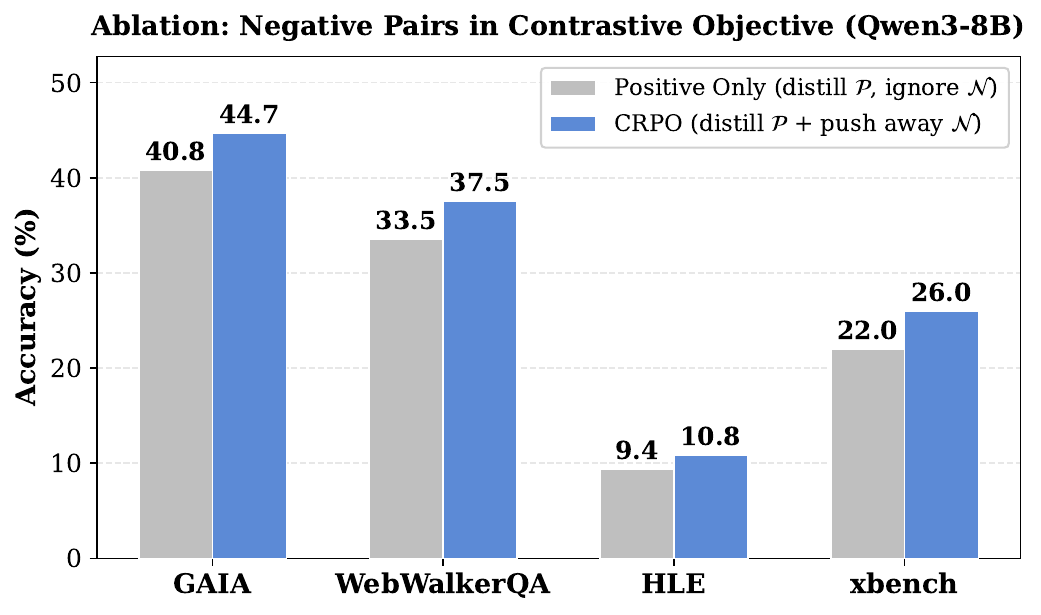}
    \caption{Ablation study on the role of negative pairs in CRPO (Qwen3-8B). ``Positive Only'' distills only at $\mathcal{P}$ positions; ``CRPO'' additionally pushes the student away from $\mathcal{N}$ positions via the contrastive loss.}
    \label{fig:ablation_negative}
\end{figure}

\subsection{Sensitivity to Global Rollout Size}\label{sec:rollout_sensitivity}

The group-wise contrastive mechanism in CRPO ranks entropy gaps across all positions of $\mathrm{G}$ rollouts to determine positive and negative pairs. A larger $\mathrm{G}$ provides more positions for ranking, yielding a more reliable partition between $\mathcal{P}$ and $\mathcal{N}$. We investigate how sensitive CRPO is to the global rollout size by varying $\mathrm{G} \in \{1, 2, 4, 8, 16\}$ on Qwen3-8B and reporting the average accuracy across the four deep-search benchmarks (GAIA, WebWalkerQA, HLE, xbench).

\paragraph{Results.} Figure~\ref{fig:rollout_sensitivity} presents the results. Performance improves substantially from $\mathrm{G}=1$ (24.8\%) to $\mathrm{G}=4$ (32.4\%), with a marginal gain when further increasing to $\mathrm{G}=8$ (34.2\%) and $\mathrm{G}=16$ (34.9\%). The curve exhibits a clear diminishing-return pattern: doubling from 1 to 2 yields $+3.8\%$, from 2 to 4 yields $+3.8\%$, from 4 to 8 yields $+1.8\%$, and from 8 to 16 only $+0.7\%$.

\paragraph{Analysis.} All variants are trained for the same number of optimization steps, so the performance differences are solely attributable to the quality of the contrastive signal under different rollout budgets. The rapid improvement at small $\mathrm{G}$ confirms that the group-wise ranking mechanism requires a sufficient number of positions to reliably separate positive from negative pairs---with only one rollout, the ranking degenerates into a within-sequence comparison that lacks diversity, producing a noisy $\mathcal{P}/\mathcal{N}$ partition that weakens both the distillation and the repulsion signal. The saturation beyond $\mathrm{G}=8$ indicates that the contrastive partition is already statistically stable once the position pool is large enough (${\sim}8 \times c_i$ positions), and additional rollouts contribute diminishing marginal information for the entropy-gap ranking. This aligns with our default setting of $\mathrm{G}=8$ in the deep-search experiments, which achieves near-optimal performance at the same training cost per step as smaller rollout budgets (the only difference is the number of sampled trajectories per question, not the total training steps).

\begin{figure}[!htbp]
    \centering
    \includegraphics[width=0.85\linewidth]{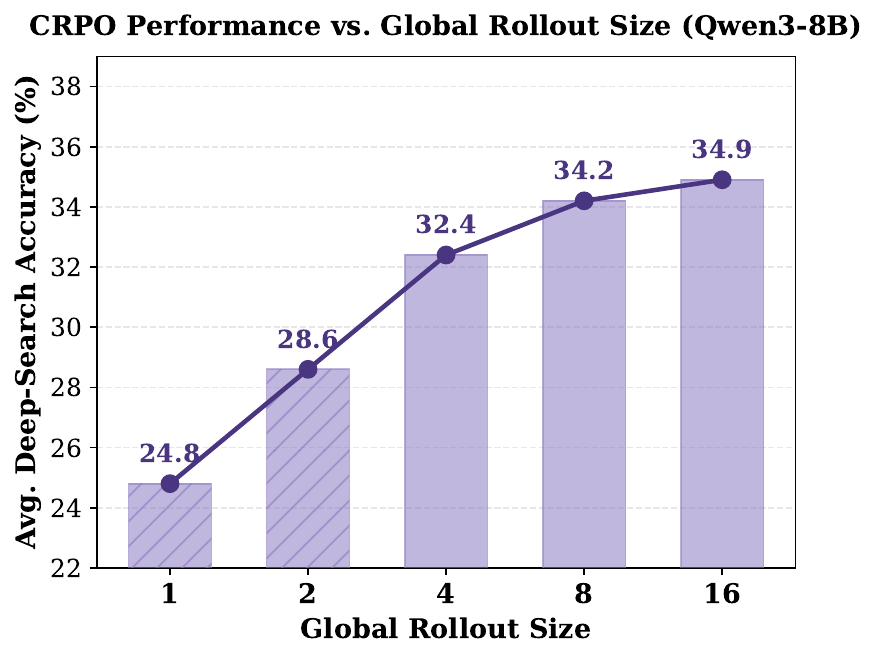}
    \caption{CRPO accuracy (average over four deep-search benchmarks) as a function of global rollout size $\mathrm{G}$ on Qwen3-8B. Performance saturates around $\mathrm{G}=8$.}
    \label{fig:rollout_sensitivity}
\end{figure}

\section{Algorithm Pseudocodes}\label{sec:algorithm_pseudocodes}

For completeness and ease of comparison, we summarize CRPO and CRPO$^{*}$ in Algorithm~\ref{alg:crpo}, and provide pseudocodes of the reference algorithms that CRPO directly builds on or compares against: GRPO~\citep{dpsk-r1} in Algorithm~\ref{alg:grpo} as the representative outcome-level RL baseline, on-policy self-distillation (OPSD/SDPO~\citep{OPSD,SDPO}) in Algorithm~\ref{alg:opsd} as the self-distillation predecessor that CRPO generalizes, and RLSD~\citep{RLSD} in Algorithm~\ref{alg:rlsd} as the hybrid baseline that re-weights GRPO advantages with the self-teacher gap. All procedures are presented under a unified notation consistent with the Methodology section.

\begin{algorithm}[!htbp]
\caption{CRPO / CRPO$^{*}$}
\label{alg:crpo}
\small
\begin{algorithmic}[1]
\REQUIRE Policy $\pi_\theta$; dataset $\mathcal{D}$; rollouts per question $\mathrm{G}$; positive ratio $p\%$; temperature $\tau$; (CRPO$^{*}$ only) coefficient $\lambda$.
\REPEAT
\STATE Sample question $x \sim \mathcal{D}$ and rollouts $\{y_i\}_{i=1}^{\mathrm{G}} \sim \pi_\theta(\cdot \mid x)$.
\STATE Evaluate $\{y_i\}$ to obtain rewards $\{R_i\}$ and feedback $\{f_i\}$.
\STATE \textit{// Build student and self-teacher contexts.}
\STATE Form $S_{i,t} = (x, y_{i,<t})$ and $T_{i,t} = (x, f_i, y_{i,<t})$.
\STATE \textit{// Judger: identify positive and negative pairs.}
\STATE Compute entropies $\mathrm{E}^{\text{S}}_{i,t}$, $\mathrm{E}^{\text{T}}_{i,t}$ and the difference $\Delta \mathrm{E}_{i,t} = \mathrm{E}^{\text{S}}_{i,t} - \mathrm{E}^{\text{T}}_{i,t}$.
\STATE Rank $\{\Delta \mathrm{E}_{i,t}\}$ within the rollout group; assign $(i,t)$ with the bottom $p\%$ to $\mathcal{P}$ and the rest to $\mathcal{N}$.
\STATE \textit{// Contrastive self-distillation.}
\STATE Evaluate $\text{sim}(i,t) = -\text{KL}(\pi_\theta(\cdot \mid S_{i,t}) \,\|\, \text{sg}(\pi_\theta(\cdot \mid T_{i,t})))$ for all $(i,t) \in \mathcal{P} \cup \mathcal{N}$.
\STATE Compute the InfoNCE loss $\mathcal{L}_{\text{CRPO}}(\theta)$ from $\{\text{sim}(i,t)\}$, $\mathcal{P}$, $\mathcal{N}$, and $\tau$.
\STATE \textit{// Optimization step.}
\IF{CRPO$^{*}$}
    \STATE $\mathcal{L}(\theta) \leftarrow \mathcal{L}_{\text{GRPO}}(\theta;\{R_i\}) + \lambda\, \mathcal{L}_{\text{CRPO}}(\theta)$
\ELSE
    \STATE $\mathcal{L}(\theta) \leftarrow \mathcal{L}_{\text{CRPO}}(\theta)$
\ENDIF
\STATE Update $\theta$ by gradient descent on $\mathcal{L}(\theta)$.
\UNTIL{converged}
\end{algorithmic}
\end{algorithm}

\begin{algorithm}[!htbp]
\caption{GRPO~\citep{dpsk-r1}}
\label{alg:grpo}
\small
\begin{algorithmic}[1]
\REQUIRE Policy $\pi_\theta$; reference policy $\pi_{\theta_{\text{ref}}}$; dataset $\mathcal{D}$; rollouts per question $\mathrm{G}$; clip threshold $\epsilon_{\text{clip}}$; KL coefficient $\beta$.
\REPEAT
\STATE Sample question $x \sim \mathcal{D}$ and rollouts $\{y_i\}_{i=1}^{\mathrm{G}} \sim \pi_{\theta_{\text{old}}}(\cdot \mid x)$.
\STATE Evaluate $\{y_i\}$ to obtain rewards $\{R_i\}_{i=1}^{\mathrm{G}}$.
\STATE \textit{// Group-relative advantage.}
\STATE Compute $\hat{R}_{i,t} = R_i - \mathrm{mean}(\{R_j\}_{j=1}^{\mathrm{G}})$ (constant in $t$).
\STATE \textit{// Clipped policy gradient with reference KL.}
\STATE Compute importance ratios $\rho_{i,t}(\theta) = \pi_\theta(y_{i,t} \mid S_{i,t}) / \pi_{\theta_{\text{old}}}(y_{i,t} \mid S_{i,t})$.
\STATE Compute clipped policy-gradient term
\STATE \quad $\begin{aligned}[t]
    \mathcal{L}_{\text{pg}}(\theta) = -\,\mathbb{E}\Big[\tfrac{1}{\mathrm{G}}\sum_{i,t} \min\!\big(&\rho_{i,t}\hat{R}_{i,t},\, \\
    \text{clip}(\rho_{i,t}, &1-\epsilon_{\text{clip}}, 1+\epsilon_{\text{clip}})\hat{R}_{i,t}\big)\Big].
\end{aligned}$
\STATE Form total loss $\mathcal{L}_{\text{GRPO}}(\theta) = \mathcal{L}_{\text{pg}}(\theta) + \beta\,\mathbb{E}[\text{KL}(\pi_\theta \,\|\, \pi_{\theta_{\text{ref}}})]$.
\STATE Update $\theta$ by gradient descent on $\mathcal{L}_{\text{GRPO}}(\theta)$; set $\pi_{\theta_{\text{old}}} \leftarrow \pi_\theta$.
\UNTIL{converged}
\end{algorithmic}
\end{algorithm}

\begin{algorithm}[!htbp]
\caption{OPSD/SDPO~\citep{OPSD,SDPO}}
\label{alg:opsd}
\small
\begin{algorithmic}[1]
\REQUIRE Policy $\pi_\theta$; dataset $\mathcal{D}$; rollouts per question $\mathrm{G}$.
\REPEAT
\STATE Sample question $x \sim \mathcal{D}$ and rollouts $\{y_i\}_{i=1}^{\mathrm{G}} \sim \pi_\theta(\cdot \mid x)$.
\STATE Evaluate $\{y_i\}$ to obtain feedback $\{f_i\}$.
\STATE \textit{// Build student and self-teacher contexts (same policy, two views).}
\STATE Form $S_{i,t} = (x, y_{i,<t})$ and $T_{i,t} = (x, f_i, y_{i,<t})$.
\STATE \textit{// Uniform self-distillation across all positions.}
\STATE Compute per-position $\text{KL}(\pi_\theta(\cdot \mid S_{i,t}) \,\|\, \text{sg}(\pi_\theta(\cdot \mid T_{i,t})))$ for all $(i,t)$.
\STATE Form $\mathcal{L}_{\text{OPSD}}(\theta) = \tfrac{1}{\mathrm{G}}\sum_{i,t} \text{KL}(\pi_\theta(\cdot \mid S_{i,t}) \,\|\, \text{sg}(\pi_\theta(\cdot \mid T_{i,t})))$.
\STATE Update $\theta$ by gradient descent on $\mathcal{L}_{\text{OPSD}}(\theta)$.
\UNTIL{converged}
\end{algorithmic}
\end{algorithm}

\begin{algorithm}[!htbp]
\caption{RLSD~\citep{RLSD}}
\label{alg:rlsd}
\small
\begin{algorithmic}[1]
\REQUIRE Policy $\pi_\theta$; reference policy $\pi_{\theta_{\text{ref}}}$; dataset $\mathcal{D}$; rollouts per question $\mathrm{G}$; mixing coefficient $\lambda$; weight clip bound $\epsilon_w$; policy clip threshold $\epsilon_{\text{clip}}$.
\REPEAT
\STATE Sample question $x \sim \mathcal{D}$ and rollouts $\{y_i\}_{i=1}^{\mathrm{G}} \sim \pi_{\theta_{\text{old}}}(\cdot \mid x)$.
\STATE Evaluate $\{y_i\}$ to obtain rewards $\{R_i\}_{i=1}^{\mathrm{G}}$ and feedback $\{f_i\}$.
\STATE \textit{// Build student and self-teacher contexts.}
\STATE Form $S_{i,t} = (x, y_{i,<t})$ and $T_{i,t} = (x, f_i, y_{i,<t})$.
\STATE \textit{// Group-relative advantage.}
\STATE Compute $\hat{R}_{i} = R_i - \mathrm{mean}(\{R_j\}_{j=1}^{\mathrm{G}})$.
\STATE \textit{// Token-level advantage reweighting via self-teacher gap.}
\STATE Evaluate teacher log-probs $\log\pi_\theta(y_{i,t} \mid T_{i,t})$ via forward pass with the augmented context.
\STATE Compute the teacher--student gap $\delta_{i,t} = \log\pi_\theta(y_{i,t} \mid T_{i,t}) - \log\pi_{\theta_{\text{old}}}(y_{i,t} \mid S_{i,t})$.
\STATE Compute reweighting factor $w_{i,t} = \text{clip}\!\big(\exp(\text{sign}(\hat{R}_{i}) \cdot \delta_{i,t}),\, 1{-}\epsilon_w,\, 1{+}\epsilon_w\big)$.
\STATE Form token-level advantage $\hat{A}_{i,t} = \hat{R}_{i} \cdot \big[(1-\lambda) + \lambda\, w_{i,t}\big]$.
\STATE \textit{// Clipped surrogate with token-level advantages.}
\STATE Compute importance ratios $\rho_{i,t}(\theta) = \pi_\theta(y_{i,t} \mid S_{i,t}) / \pi_{\theta_{\text{old}}}(y_{i,t} \mid S_{i,t})$.
\STATE Form $\mathcal{L}_{\text{RLSD}}(\theta) = -\,\mathbb{E}\Big[\tfrac{1}{\mathrm{G}}\sum_{i,t} \min\!\big(\rho_{i,t}\hat{A}_{i,t},\, \text{clip}(\rho_{i,t}, 1{-}\epsilon_{\text{clip}}, 1{+}\epsilon_{\text{clip}})\hat{A}_{i,t}\big)\Big]$.
\STATE Update $\theta$ by gradient descent on $\mathcal{L}_{\text{RLSD}}(\theta)$; set $\pi_{\theta_{\text{old}}} \leftarrow \pi_\theta$.
\UNTIL{converged}
\end{algorithmic}
\end{algorithm}

\end{document}